\newtheorem{theorem}{Theorem}[section]
\numberwithin{equation}{section}
\newtheorem{corollary}[theorem]{Corollary}
\newtheorem{lemma}[theorem]{Lemma}
\newtheorem{remark}[theorem]{Remark}
\newenvironment{proof}[1][Proof]{\textbf{#1. }}{\ \rule{0.5em}{0.5em}}%
\journal{}
\begin{document}
\begin{frontmatter}



\title{Learning and approximation  capabilities of orthogonal super greedy algorithm \tnoteref{t1}} \tnotetext[t1]{The research was
supported by the National 973 Programming (2013CB329404), the Key
Program of National Natural Science Foundation of China (Grant No.
11131006).}

\author{Jian Fang$^1$  }

\author{Shaobo Lin$^{2}$\corref{*}}\cortext[*]{Corresponding author: sblin1983@gmail.com}

\author{Zongben Xu$^1$}

\address{1. Institute for Information and
System Sciences, School of Mathematics and Statistics, Xi'an
Jiaotong University, Xi'an, 710049, China

2. College of Mathematics and Information Science, Wenzhou
University, Wenzhou 325035, China }

\begin{abstract}
We consider the approximation capability  of orthogonal super greedy
algorithms (OSGA) and its applications in supervised learning. OSGA
is concerned with selecting more than one atoms in each iteration
step, which, of course, greatly reduces the computational burden
when compared with the conventional
 orthogonal greedy algorithm (OGA). We prove that even for
function classes  that are not the convex hull of the dictionary,
OSGA does not degrade the approximation capability of OGA provided
the dictionary is incoherent. Based on this, we deduce a tight
generalization error bound for OSGA learning. Our results show that
in the realm of supervised learning, OSGA provides a possibility to
further reduce the computational burden  of OGA in the premise of
maintaining its prominent  generalization capability.
\end{abstract}

\begin{keyword}
supervised learning, nonlinear approximation, orthogonal super
greedy algorithm, orthogonal greedy algorithm.

\end{keyword}
\end{frontmatter}
\section{Introduction}

A greedy algorithm is a stepwise inference process that follows the
problem solving heuristic of making the locally optimal choice at
each stape with the hope of finding a global optimum. The use of
greedy algorithms in the context of nonlinear approximation
\cite{Barron2008} is very appealing since it greatly reduces the
computational burden when compared with standard model selection
using general dictionaries. This property triggers avid research
activities of greedy algorithms in signal processing
\cite{Dai2009,Kunis2008,Tropp2004}, inverse problem
\cite{Donoho2012,Tropp2010} and sparse approximation
\cite{Donoho2007,Temlyakov2011}.

Greedy learning, or more specifically, applying greedy algorithms to
tackle supervised learning problems, has been proved to possess
charming generalization capability with lower computational burden
than the widely used coefficient-based regularization methods
\cite{Barron2008}. From approximation to learning, greedy learning
can be usually formulated as a four-stage stepwise learning strategy
\cite{Xulin2014}.
 The first one is the ``dictionary-selection'' stage which  constructs a suitable set of basis functions.
 The second one is the ``greedy-definition'' stage that sets the measurement
  criterion to choose new atoms (or elements) from the dictionary  in each greedy step.
 The third one is   the ``iterative-rule'' stage that defines the
 estimator based on the selected ``greedy atoms'' and the estimator obtained in the previous greedy step.  The last one
is the ``stopping-criterion'' stage which focuses on how to
terminate the learning process.

Since greedy learning's inception in supervised learning
\cite{Friedman2001}, the aforementioned four stages were
comprehensively studied for various purposes. For the
``dictionary-selection'' stage,  Chen et al. \cite{Chen2013} and Lin
et al. \cite{Lin2013a} proposed that the kernel based dictionary is
a good choice for greedy learning. For the ``greedy-definition''
stage, Xu et al. \cite{Xulin2014} pointed out that the metric of
greedy-definition is not uniquely the greediest one. They provided a
threshold to discriminate whether or not a selection is greedy and
analyzed the feasibility of such a discrimination measurement. For
the ``iterative-rule stage'', Barron et al. \cite{Barron2008}
declared that both relaxed greedy iteration and orthogonal greedy
iteration can achieve a fast learning rate for greedy learning. For
the ``stopping-criterion'' stage, Barron et al. \cite{Barron2008}
provided an   $l^0$ complexity regularization strategy and Chen et
al. \cite{Chen2013} proposed an $l^1$ complexity constraint
strategy.  All these results showed that as a new learning scheme,
greedy learning deserves avid studying due to its stepwise learning
character \cite{Friedman2001}.

Although the importance of a single stage of greedy learning was
widely studied \cite{Barron2008,Chen2013,Lin2013a,Xu2014}, the
relationship between these stages and their composite effects for
learning also need classifying. In the recent work \cite{Xulin2014},
Xu et al. established a relationship between the
``greedy-definition'' and ``stopping-criterion'' stages and
successfully  reduced the computational cost of greedy learning
without sacrificing the generalization capability. This implies that
the study of these relationships may bring additional benefits of
greedy learning. In this paper, we aim to study the relationship
between the ``dictionary-selection'' and ``greedy-definition''
stages of orthogonal greedy algorithms (OGA). Our idea mainly stems
from an interesting observation. We observe that if the selected
dictionary is an orthogonal basis, then it is not necessary to
define greedy learning as a stepwise strategy. Indeed, due to the
orthogonal property, we can select all required atoms from the
dictionary simultaneously. Conversely, if the dictionary is
redundant (or linear dependent), then greedy learning must be
defined as a stepwise strategy due to the redundant property which
usually causes a disorder of the learning process.
   This implies that
 specific features of a dictionary can
be used to modify the greedy definition.

Therefore, if the coherence, a specific feature of a dictionary, is
utilized to describe the dictionary, we can improve the performance
of OGA in the direction of either reducing the computational burden
or enhancing the generalization capability. In this paper, we study
the learning capability of
 orthogonal super greedy algorithm (OSGA) which was proposed by Liu and Temlyakov
 \cite{Liu2012}. OSGA
 selects more than one atoms from a    dictionary in each
iteration step and hence reduces the computational burden of OGA.
The aim of the present paper can be explained in two folds. The first one is to study
the approximation capability of OSGA and the other is to pursue the
pros and cons of OSGA in the context of supervised learning.

For OSGA approximation, it was shown in \cite{Liu2012} (see also
\cite{Liu2012a}) that for incoherent dictionaries, OSGA  reduces the
computational burden   when compared with   OGA. It can be found in
\cite[Theorem 2]{Liu2012} that such a significant computational
burden-reduction  does not degrade the approximation capability if
the target functions belong to the convex hull of the dictionary.
However, such an assumption to the target functions  is very
stringent  if the dimension of variable is large \cite{Barron2008}.
Our purpose is to circumvent the above problem by deducing
convergence rates for functions not simply related to the convex
hull of the dictionary. Interestingly, we find that, even for
functions out of the convex hull of the dictionary, the
approximation capability of OSGA is similar as that of OGA
\cite{Barron2008}.

For OSGA learning, we find that if the dictionary is incoherent,
then OSGA learning with appropriate step-size can reduce the
computational burden of OGA learning further. In particular, using
the established approximation results of OSGA, we can deduce an
 almost  same learning rate as that of   OGA. This means that
 studying
the relationship between the ``dictionary-selection'' and
``greedy-definition'' stages can   build more efficient learning
schemes with the same rate of convergence as   OGA.

The paper is organized as follows. In Section 2, we review notations
and preliminary results in  greedy-type algorithms that are
frequently referred to throughout the paper. In Section 3,  we show
the main result of this paper where a general approximation theorem
for OSGA and  its applications in supervised learning are
established. In Section 4, we present a line of simulations to
verify our viewpoints. In Section 5, we   give   proofs of the main
results. In the last section, we further discuss the OSGA learning
and draw a simple conclusion of this paper.

\section{Greedy-type algorithms}
Let $H$ be a Hilbert space endowed with norm and inner product
$\|\cdot\|$ and $\langle\cdot,\cdot,\rangle$, respectively. Let
$\mathcal D=\{g\}_{g\in\mathcal D}$ be a given dictionary.  Define
$\mathcal L_1=\{f:f=\sum_{g\in D}a_gg\}.$ The norm of $\mathcal L_1$
is defined by $\|f\|_{\mathcal L_1}:=\inf\left\{\sum_{g\in \mathcal
D}|a_g|:f=\sum_{g\in \mathcal D}a_gg\right\}.$ We shall assume here
and later that the elements of the dictionary are normalized
according to $\|g\|=1$.

There exist several types of greedy algorithms \cite{Temlakov2008}.
The four most commonly used are the pure greedy, orthogonal greedy,
relaxed greedy and stepwise projection algorithms, which are often
denoted by their acronyms PGA, OGA, RGA and SPA, respectively. In
all the above greedy algorithms, we begin by setting $f_0:=0$. The
new approximation $f_k\; (k \ge 1)$ is defined based on $f_{k-1}$
and its residual $r_{k-1}:=f-f_{k-1}$. In OGA, $f_k$ is defined as
$$
                f_k=P_{V_k} f,
$$
where $P_{V_k}$ is the orthogonal projection onto
$V_k=\mbox{span}\{g_1,\dots,g_k\}$ and $g_k$ is defined as
$$
        g_k=\arg\max_{g\in\mathcal D}|\langle r_{k-1},g\rangle|.
$$

Let
$$
           M=M(\mathcal D)=\sup_{g\neq h,g,h\in\mathcal D}|\langle
           g,h\rangle|
$$
be the coherence   of the  dictionary $\mathcal D$. Let $s\geq 1$ be
a natural number. Initially, set $f^s_0=0$ and $r^s_0=f$, then the
OSGA  proposed in \cite{Liu2012} for each $k\geq 1$ can be
inductively define as the following.

1) $g_{(k-1)s+1},\dots,g_{ks}\in\mathcal D$ are chosen according to
$$
              \min_{i\in I_k}|\langle r^s_{k-1},g_i\rangle|\geq
              \sup_{g\in\mathcal D, g\neq g_i, i\in I_k}|\langle
              r^s_{k-1},g\rangle|,
$$
where  $I_k=[(k-1)s+1,ks]$.

2) Let $V_{ks}=\mbox{span}\{g_1,\dots,g_{ks}\}$ and define
\begin{equation}\label{OSGA estimator1}
           f^s_k:=P_{V_{ks}} f,
\end{equation}
and
$$
           r^s_k=f-f_k^s.
$$

 The following Lemma \ref{pure}  proved in \cite{Liu2012} shows
that OSGA can achieve the optimal approximation rate of $ks$ term
nonlinear approximation \cite{Temlyakov2003}.

\begin{lemma}\label{pure}
  Let $\mathcal D$ be a dictionary with coherence $M$. Then, for
  $s\leq (2M)^{-1}+1,$ the  OSGA estimator (\ref{OSGA estimator1})
  provides
   an approximation of $f\in\mathcal L_1$ with the
  following error bound:
$$
          \|r_k^s\|^2\leq 40.5\|f\|_{\mathcal L_1}(sk)^{-1}, \ k=1,2,\dots.
$$
\end{lemma}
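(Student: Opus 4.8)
The plan is to adapt the classical convergence analysis of the orthogonal greedy algorithm (see Barron et al. \cite{Barron2008} and DeVore–Temlyakov) to the super-greedy setting, exploiting incoherence to control the deviation from a genuine orthogonal projection. First I would write the residual reduction inequality for one super-greedy step. Since $f^s_k = P_{V_{ks}}f$, the residual $r^s_k$ is orthogonal to all of $g_1,\dots,g_{ks}$, so for any $g\in\mathcal D$ we have $\langle r^s_k,g\rangle = \langle r^s_{k-1},g\rangle - \langle P_{V_{ks}}r^s_{k-1},g\rangle$, and more usefully $\|r^s_k\|^2 \le \|r^s_{k-1} - \sum_{i\in I_k} c_i g_i\|^2$ for any coefficients $c_i$, because $f^s_k$ is the best approximation from $V_{ks}\supseteq \mathrm{span}\{g_i:i\in I_k\}$ (here I should be slightly careful and instead bound $\|r^s_k\|^2\le \|r^s_{k-1}-h\|^2$ for any $h\in V_{ks}$, then optimize over the new atoms only). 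Choosing $h$ to be a suitable multiple of $\sum_{i\in I_k}g_i$ or, better, the projection of $r^s_{k-1}$ onto $\mathrm{span}\{g_i:i\in I_k\}$, one gets $\|r^s_k\|^2 \le \|r^s_{k-1}\|^2 - \|P_{U_k}r^s_{k-1}\|^2$ where $U_k=\mathrm{span}\{g_i:i\in I_k\}$.

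The key step is then to lower-bound $\|P_{U_k}r^s_{k-1}\|^2$ in terms of $\max_{g\in\mathcal D}|\langle r^s_{k-1},g\rangle|^2$. Because the $s$ atoms chosen in step $k$ are the $s$ largest correlators and, by the condition $s\le (2M)^{-1}+1$, the Gram matrix of $\{g_i:i\in I_k\}$ is diagonally dominant — its eigenvalues lie in $[1-(s-1)M,\,1+(s-1)M]\subseteq[1/2,3/2]$ — I can invert it quantitatively. Writing $a=(\langle r^s_{k-1},g_i\rangle)_{i\in I_k}$ and $G$ for the Gram matrix, $\|P_{U_k}r^s_{k-1}\|^2 = a^{\top}G^{-1}a \ge \|a\|_2^2/\lambda_{\max}(G) \ge \frac{2}{3}\|a\|_2^2 \ge \frac{2}{3}\,s\,\bigl(\min_{i\in I_k}|\langle r^s_{k-1},g_i\rangle|\bigr)^2 \ge \frac{2}{3}\,s\,\bigl(\sup_{g\in\mathcal D\setminus\{g_i\}}|\langle r^s_{k-1},g\rangle|\bigr)^2$, using the selection rule in part 1). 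Since the chosen atoms themselves make $r^s_k$ orthogonal to them, $\sup_{g\in\mathcal D}|\langle r^s_{k-1},g\rangle|$ is (up to trivialities) achieved within this supremum, so altogether $\|P_{U_k}r^s_{k-1}\|^2 \ge c\,s\,\sup_{g\in\mathcal D}|\langle r^s_{k-1},g\rangle|^2$ for an absolute constant $c>0$ (the $40.5$ comes from tracking these constants precisely — the factor $2/3$ from the eigenvalue bound and a factor $3$ or so from handling the discrepancy between $\min_{i\in I_k}$ and the true supremum).

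Next I would combine this with the standard bound $\sup_{g\in\mathcal D}|\langle r^s_{k-1},g\rangle| \ge \|r^s_{k-1}\|^2/\|f\|_{\mathcal L_1}$, which holds because $f\in\mathcal L_1$ means $f=\sum_g a_g g$ with $\sum_g|a_g|\le \|f\|_{\mathcal L_1}$, hence $\|r^s_{k-1}\|^2 = \langle r^s_{k-1},f\rangle = \sum_g a_g\langle r^s_{k-1},g\rangle \le \|f\|_{\mathcal L_1}\sup_g|\langle r^s_{k-1},g\rangle|$ (using $\langle r^s_{k-1},f^s_{k-1}\rangle=0$). This yields the recursion
\begin{equation*}
\|r^s_k\|^2 \le \|r^s_{k-1}\|^2 - \frac{c\,s}{\|f\|_{\mathcal L_1}^2}\,\|r^s_{k-1}\|^4.
\end{equation*}
Finally I would run the elementary lemma on sequences satisfying $a_k \le a_{k-1} - b\,s\,a_{k-1}^2$: setting $a_k=\|r^s_k\|^2/\|f\|_{\mathcal L_1}$ and $b$ the absolute constant, induction gives $a_k \le (c's k)^{-1}$, i.e. $\|r^s_k\|^2 \le C\|f\|_{\mathcal L_1}(sk)^{-1}$, and chasing the constants delivers $C=40.5$ (one also checks the base case $k=1$ separately, which is where a slightly larger constant than the asymptotic one is needed).

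The main obstacle I expect is the quantitative control of $\|P_{U_k}r^s_{k-1}\|^2$: one must (i) use incoherence with the sharp threshold $s\le(2M)^{-1}+1$ to bound the Gram eigenvalues away from $0$, and (ii) reconcile the super-greedy selection rule — which only guarantees the chosen correlators dominate those of \emph{other} atoms — with the need for a clean lower bound by $\max_{g\in\mathcal D}|\langle r^s_{k-1},g\rangle|$. Handling the case where some chosen atom has the globally largest correlator, and being careful that $\min_{i\in I_k}|\langle r^s_{k-1},g_i\rangle|$ can be much smaller than that maximum (so one should lower-bound $\|a\|_2^2$ by the single largest entry, not $s$ times the smallest, when that is advantageous), is the delicate bookkeeping that produces the specific constant $40.5$.
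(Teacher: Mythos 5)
Your overall skeleton (residual reduction $\|r^s_k\|^2\le\|r^s_{k-1}\|^2-\|P_{U_k}r^s_{k-1}\|^2$, Gram-matrix eigenvalue control via $M(s-1)\le 1/2$, duality bound $\|r^s_{k-1}\|^2\le\|f\|_{\mathcal L_1}\sup_g|\langle r^s_{k-1},g\rangle|$, and the quadratic recursion lemma) is the right frame, and it matches the machinery used in the paper's proof of Theorem~\ref{theorem1} (Lemma~\ref{coherence and RIP}, Lemma~\ref{important estimation}, Lemma~\ref{induction}). But there is a genuine gap at the one step that makes this lemma nontrivial: the extraction of the factor $s$ in the denominator. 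Your chain is $\|a\|_2^2\ge s\,\min_{i\in I_k}|\langle r^s_{k-1},g_i\rangle|^2\ge s\,\sup_{g\notin\{g_i\}}|\langle r^s_{k-1},g\rangle|^2$, and you then want to feed this into the duality bound. That does not connect: the duality bound controls $\sup_{g\in\mathcal D}|\langle r^s_{k-1},g\rangle|$ over \emph{all} atoms, and by the selection rule that supremum is attained at a \emph{selected} atom, which can be far larger than $\min_{i\in I_k}|\langle r^s_{k-1},g_i\rangle|$ (think of $f$ essentially supported on one atom: one correlator is of order $1$ and the other $s-1$ are of order $M$). Your proposed repair --- lower-bounding $\|a\|_2^2$ by the single largest entry --- is valid but forfeits the factor $s$ entirely, yielding only $\|r^s_k\|^2\lesssim\|f\|_{\mathcal L_1}^2\,k^{-1}$, i.e.\ the OGA rate per iteration with no computational gain. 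Indeed the per-step inequality $\|r^s_k\|^2\le\|r^s_{k-1}\|^2-c\,s\,\|r^s_{k-1}\|^4/\|f\|_{\mathcal L_1}^2$ that you write down is false in general for the reason above.

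The missing idea is how Liu--Temlyakov (and the paper, in the proof of Theorem~\ref{theorem1}) actually obtain the $s^{-1/2}$ gain: expand $f=\sum_j c_jg_j$ with $|c_1|\ge|c_2|\ge\cdots$ and $\sum_j|c_j|\le\|f\|_{\mathcal L_1}+\delta$; first show that every atom whose coefficient exceeds $2(\|f\|_{\mathcal L_1}+\delta)/s$ is already captured in the first super-greedy iteration (so later residuals are orthogonal to them and all remaining coefficients are at most $2(\|f\|_{\mathcal L_1}+\delta)/s$); then group the remaining indices into consecutive blocks $J_l$ of size $s$ and apply Cauchy--Schwarz blockwise,
$$
\langle r^s_{m-1},f\rangle\le\sum_l\Bigl(\sum_{j\in J_l}c_j^2\Bigr)^{1/2}\Bigl(\sum_{j\in J_l}\langle r^s_{m-1},g_j\rangle^2\Bigr)^{1/2}
\le (1+M(s-1))^{1/2}\,q_s\sum_l\Bigl(\sum_{j\in J_l}c_j^2\Bigr)^{1/2},
$$
where $q_s$ is the supremum of $\|P_{G(s)}r^s_{m-1}\|$ over $s$-subsets; the decreasing rearrangement then gives $\sum_l(\sum_{j\in J_l}c_j^2)^{1/2}\le 3(\|f\|_{\mathcal L_1}+\delta)s^{-1/2}$. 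Squaring produces the factor $s$ in the per-step gain, and only then does Lemma~\ref{induction} deliver $(sk)^{-1}$. Without this blockwise rearrangement argument your proof cannot reach the stated rate, and no amount of constant-chasing around the $2/3$ eigenvalue bound will substitute for it.
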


\section{Approximation and learning by OSGA}
In this section, after presenting some basic conceptions of the
statistical learning theory, we deduce a general approximation
theorem concerning OSGA and pursue its applications in supervised
learning.

\subsection{Statistical learning theory}
   In most
of machine learning problems, data are taken from two sets: the
input space $X\subseteq \mathbf R^d$ and the output space
$Y\subseteq \mathbf R$. The relation between the variable $x\in X$
and the variable $y\in Y$ is not deterministic, and is described by
a probability distribution $\rho$ on $Z:=X\times Y$ that admits the
decomposition
$$
                    \rho(x,y)=\rho_X(x)\rho(y|x),
$$
in which $\rho(y|x)$ denotes the conditional (given $x$) probability
measure on $Y$, and $\rho_X(x)$ the marginal probability measure on
$X$. Let ${\bf z}=(x_i,y_i)_{i=1}^n$ be a set of finite random
samples of size $n$, $n\in\mathbf N$, drawn identically,
independently according to $\rho$ from $Z$. The set of examples
${\bf z}$ is called a training set. Without loss of generality, we
assume that $|y_i|\leq L$ for a prescribed (and fixed) $L>0$.

The  goal of supervised learning is to derive a function $f:X\rightarrow Y$ from a training set such that $f(x)$ is an effective and
reliable estimate of $y$ when $x$ is given.
 A natural measurement of the error
incurred by using $f(x)$ for this purpose is the generalization
error, given by
$$
                     \mathcal E(f):=\int_Z(f(x)-y)^2d\rho,
$$
which is minimized by the regression function \cite{Cucker2007},
defined by
$$
                     f_\rho(x):=\int_Yyd\rho(y|x).
$$
This ideal minimizer $f_\rho$ exists in theory only. In practice, we
do not know $\rho$, and  we can only access random examples from
$X\times Y$ sampled according to $\rho$.

Let $L^2_{\rho_{_X}}$ be the Hilbert space of $\rho_X$ square
integrable function on $X$, with norm denoted by $\|\cdot\|_\rho.$
With the assumption that $f_\rho\in L^2_{\rho_{_X}}$, it is well
known \cite{Cucker2001} that, for every $f\in L^2_{\rho_X}$, there
holds
\begin{equation}\label{equality}
                     \mathcal E(f)-\mathcal E(f_\rho)=\|f-f_\rho\|^2_\rho.
\end{equation} The task of the least square regression
problem is then to construct functions $f_{\bf z}$ that approximates
$f_\rho$, in the norm $\|\cdot\|_\rho$, using finite samples.

\subsection{Approximation capability of OSGA}

A consensus in the nonlinear approximation community is that
greedy-type algorithms can break the ``curse of dimensionality''
\cite{Temlyakov2003}. Lemma \ref{pure} seems to verify this
assertion, since a dimensional independent convergence rate was
deduced. We find, however, this is not exactly true since, in
practice, the condition that the target functions belong to the
convex hull of the dictionary becomes more and more stringent as the
dimension of variable grows \cite{Barron2008}.  The similar
phenomenon concerning OGA approximation was successfully tackled in
\cite{Barron2008} by proving convergence results for a variety of
function classes and not simply those are related to the convex hull
of the dictionary.

Along the flavor of \cite{Barron2008},
  we
study the behavior of  OSGA  approximation when the target functions
$f\in H$ are more general.  We consider the real interpolation
spaces \cite{Bennett1988}
$$
           \mathcal R_p=[H,\mathcal L_1]_{\theta,\infty},\
           0<\theta<1,
$$
with $p$ defined by
$$
           \frac1p=\theta+\frac{1-\theta}2=\frac{1+\theta}2.
$$
Recall that  $f\in[X,Y]_{\theta,\infty}$ if and only if for all
$t>0$, there holds \cite{Devore1993}
\begin{equation}\label{kfunctional}
               K(f,t)\leq Ct^\theta,
\end{equation}
where
$$
              K(f,t)=K(f,t,X,Y)=\inf_{h\in Y}\{\|f-h\|_X+t\|h\|_Y\}
$$
is the so-called $K$-functional.  The smallest $C$ satisfying
(\ref{kfunctional}) defines a norm for the interpolation space
$[X,Y]_{\theta,\infty}$.  Based on these preliminaries, we can
obtain the following Theorem \ref{theorem1}.

\begin{theorem}\label{theorem1}
Let $\mathcal D$ be a dictionary with coherence $M$. Then for all
$f\in H$, any $h\in\mathcal L_1$ and arbitrary $s\leq (2M)^{-1}+1$,
the   OSGA estimator (\ref{OSGA estimator1}) satisfies
\begin{equation}\label{main error}
      \|r_k^s\|^2\leq\|f-h\|^2+\frac{27}2\|h\|_{\mathcal
      L_1}^2(sk)^{-1},\ k=1,2,\dots,
\end{equation}
and therefore,
\begin{equation}\label{approx k functional}
       \|r_k^s\|\leq K\left(f,\frac{3\sqrt{6}}2(sk)^{-1/2}, H,\mathcal
       L_1\right), k=1,2,\dots.
\end{equation}
\end{theorem}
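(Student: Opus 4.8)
The plan is to establish the key error bound \eqref{main error} first, and then derive the $K$-functional estimate \eqref{approx k functional} from it by optimizing over $h\in\mathcal L_1$. For \eqref{main error}, fix an arbitrary $h\in\mathcal L_1$ and write $f = h + (f-h)$. Since $f^s_k = P_{V_{ks}}f$ is the orthogonal projection, the residual $r^s_k = f - P_{V_{ks}}f$ is orthogonal to $V_{ks}$, so $\|r^s_k\|^2 = \langle r^s_k, f\rangle = \langle r^s_k, f-h\rangle + \langle r^s_k, h\rangle \le \|r^s_k\|\,\|f-h\| + \langle r^s_k, h\rangle$. The main work is to control $\langle r^s_k, h\rangle$, or more precisely to run the OSGA greedy-iteration analysis \emph{relative to the target $h$} rather than $f$. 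The crucial observation is that OSGA chooses its atoms using the residual $r^s_{k-1} = f - P_{V_{(k-1)s}}f$, but because $h - P_{V_{(k-1)s}}h$ differs from $r^s_{k-1}$ only by an element of $V_{(k-1)s}$, the inner products $\langle r^s_{k-1}, g\rangle$ that drive the atom selection are unchanged if we replace $f$ by $h$ in the residual. Hence the same selected atoms $g_{(k-1)s+1},\dots,g_{ks}$ are "nearly greedy" for the auxiliary sequence obtained by applying OSGA to $h$.

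Concretely, I would introduce the auxiliary quantities $\tilde r_k := h - P_{V_{ks}}h$ and mimic the proof of Lemma~\ref{pure} (from \cite{Liu2012}) with $f$ replaced by $h$ and with the weakness/near-greediness constant tracked carefully. The incoherence hypothesis $s\le(2M)^{-1}+1$ is exactly what makes the $s$ simultaneously selected atoms behave like a single greedy step with a controlled loss: the Gram matrix of $\{g_{(k-1)s+1},\dots,g_{ks}\}$ is diagonally dominant, so projecting onto their span captures a definite fraction of the squared norm corresponding to the largest inner product. This yields a recursion of the shape $\|\tilde r_k\|^2 \le \|\tilde r_{k-1}\|^2 - c\,\|\tilde r_{k-1}\|^4 / (s k \,\|h\|_{\mathcal L_1}^2)$ (the $\|h\|_{\mathcal L_1}$ entering because $\|\tilde r_{k-1}\|^2 = \langle \tilde r_{k-1}, h\rangle \le \|h\|_{\mathcal L_1}\sup_g|\langle \tilde r_{k-1}, g\rangle|$), and the standard lemma on such sequences gives $\|\tilde r_k\|^2 \le C\|h\|_{\mathcal L_1}^2 (sk)^{-1}$ with $C$ of the stated size. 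Feeding this back, since $\langle r^s_k, h\rangle = \langle r^s_k, h - P_{V_{ks}}h\rangle = \langle r^s_k, \tilde r_k\rangle \le \|r^s_k\|\,\|\tilde r_k\|$, the inequality $\|r^s_k\|^2 \le \|r^s_k\|(\|f-h\| + \|\tilde r_k\|)$ gives $\|r^s_k\| \le \|f-h\| + \|\tilde r_k\|$, and squaring (with $2ab\le a^2+b^2$ absorbed into the constant, explaining why $40.5$ becomes $27/2$ after the split) produces \eqref{main error}.

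The passage to \eqref{approx k functional} is then routine: bound $\sqrt{\|f-h\|^2 + \tfrac{27}{2}\|h\|_{\mathcal L_1}^2(sk)^{-1}} \le \|f-h\| + \sqrt{\tfrac{27}{2}}\,(sk)^{-1/2}\|h\|_{\mathcal L_1}$, note $\sqrt{27/2} = 3\sqrt6/2$, and take the infimum over $h\in\mathcal L_1$ to recognize the right-hand side as $K\!\left(f, \tfrac{3\sqrt6}{2}(sk)^{-1/2}, H, \mathcal L_1\right)$.

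I expect the main obstacle to be the second paragraph: carrying through the Liu--Temlyakov OSGA recursion against the auxiliary target $h$ while keeping the constants explicit, and in particular verifying that the same selected atoms remain sufficiently greedy for $\tilde r_{k-1}$ — this hinges on the identity $\langle r^s_{k-1}, g\rangle = \langle \tilde r_{k-1}, g\rangle$ for all $g\in\mathcal D$ plus the orthogonality $\langle r^s_k, h\rangle = \langle r^s_k, \tilde r_k\rangle$, and on the incoherence bound controlling the efficiency of the $s$-atom projection step.
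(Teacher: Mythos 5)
There is a genuine gap at the heart of your second paragraph. You propose to run the OSGA recursion on the auxiliary residual $\tilde r_{k}:=h-P_{V_{ks}}h$, and you justify that the atoms selected by the algorithm remain (nearly) greedy for $\tilde r_{k-1}$ via the claimed identity $\langle r^s_{k-1},g\rangle=\langle\tilde r_{k-1},g\rangle$ for all $g\in\mathcal D$, on the grounds that $r^s_{k-1}$ and $\tilde r_{k-1}$ ``differ only by an element of $V_{(k-1)s}$''. That is false: $r^s_{k-1}-\tilde r_{k-1}=(f-h)-P_{V_{(k-1)s}}(f-h)$ lies in $V_{(k-1)s}^{\perp}$, not in $V_{(k-1)s}$, so for $g\notin V_{(k-1)s}$ the two inner products differ by $\langle (I-P_{V_{(k-1)s}})(f-h),g\rangle$, which can be as large as $\|f-h\|$. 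Consequently the selected atoms need not be even approximately greedy for $h$ (OSGA may keep picking atoms aligned with $f-h$ that are nearly orthogonal to $h$), and your recursion for $\|\tilde r_k\|^2$ does not close. The paper never transfers greediness to $h$: it keeps the recursion on $a_m=\|r^s_m\|^2-\|f-h\|^2$ and bounds $\langle r^s_{m-1},h\rangle$ directly by expanding $h=\sum_j c_jg_j$ in the dictionary, grouping the coefficients into blocks $J_l$ of size $s$, and combining Lemma \ref{important estimation} with the fact that the selected atoms (nearly) maximize $\|P_{G(s)}(r^s_{m-1})\|$ over all $s$-element subsets; the blocking estimate $\sum_l\bigl(\sum_{j\in J_l}c_j^2\bigr)^{1/2}\le 3(\|h\|_{\mathcal L_1}+\delta)s^{-1/2}$ is exactly what produces the factor $s$ in the denominator of \eqref{main error}. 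Your sketch replaces this with the single-atom bound $\langle\tilde r,h\rangle\le\|h\|_{\mathcal L_1}\sup_g|\langle\tilde r,g\rangle|$, which at best yields decay $k^{-1}$ rather than $(sk)^{-1}$ even if the greediness transfer were valid.

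A secondary problem: even granting $\|\tilde r_k\|\le\sqrt{27/2}\,\|h\|_{\mathcal L_1}(sk)^{-1/2}$, your final step $\|r^s_k\|\le\|f-h\|+\|\tilde r_k\|$ followed by ``$2ab\le a^2+b^2$ absorbed into the constant'' yields $\|r^s_k\|^2\le 2\|f-h\|^2+2\|\tilde r_k\|^2$, with coefficient $2$ on $\|f-h\|^2$. That is strictly weaker than \eqref{main error}, whose coefficient $1$ is what lets one take square roots and infima to identify the right-hand side of \eqref{approx k functional} with the $K$-functional as defined (rather than a constant multiple of an equivalent quantity). The paper's recursion on $a_m=\|r^s_m\|^2-\|f-h\|^2$ is designed precisely to keep that coefficient equal to $1$. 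Your passage from \eqref{main error} to \eqref{approx k functional} in the last paragraph is correct and matches the paper.
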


From the definition of the interpolation space $\mathcal R_p$ and
(\ref{approx k functional}), it follows that $f\in \mathcal R_p$
implies the rate of decay
$$
        \|r_k^s\|\leq C_1 (ks)^{-\theta/2}.
$$
A similar error estimate for OGA has been provided in \cite[Theorem
2.3]{Barron2008}, which says that for all $f\in H$ and any
$h\in\mathcal L_1$, the error of the OGA estimator satisfies
\begin{equation}\label{oga}
      \|r_N\|^2\leq\|f-h\|^2+4\|h\|_{\mathcal
      L_1}^2N^{-1},\ N=1,2,\dots.
\end{equation}
We note that OSGA  adds $s$ new atoms at each iteration and makes
one orthogonal projection at each iteration. After $k$ iterations of OSGA, there are totally
$sk$ atoms to build up the estimator. For comparison, OGA
adds one atom at each iteration and makes one orthogonal projection
at each iteration. While, it is obvious that
there need $sk$ iterations of OGA to deduce an estimator with $sk$
atoms. Thus, the computational cost of OSGA is near $s$ times lower
than OGA. (\ref{main error}) together with (\ref{oga})  yields that
such a computational burden reduction does not degenerate the
approximation capability. The reason of this is that the specific
feature of the dictionary, $M$-coherence, is used in OSGA. It can be
found in Theorem \ref{theorem1} that if $M>1/2$,  OSGA
coincides with OGA.

\subsection{OSGA learning}

It was pointed out in \cite{Liu2012} that OSGA can be applied in
compressed sensing very well. In this subsection, we pursue its
applications in supervised learning. It can   also be found in
\cite[Theorem 3.1]{Barron2008} that the error estimate formed as
(\ref{main error}) plays an important role in analyzing the
generalization capability of greedy-type algorithms. Based on this,
we can deduce the generalization error of OSGA   in the context of
regression.

Given training samples ${\bf z}$, we define the empirical norm and
inner product as
$$
              \|f\|_n^2=\frac1n\sum_{i=1}^n|f(x_i)|^2,
$$
and
$$
         \langle f,g\rangle_n=\frac1n\sum_{i=1}^nf(x_i)g(x_i),
$$
respectively. The OSGA learning scheme studied in this subsection is
shown in the following Algorithm 1.


\begin{algorithm}[H]
\caption{ \textbf{\large{OSGA learning}}.} \label{algo1}

Initialization: Data ${\bf z}=(x_i,y_i)_{i=1}^n$, step-size $s$,
iteration number $m$, inner product $\langle\cdot,\cdot\rangle_n$, $
f_0=0$, $V_0=\varnothing$, $r^s_{{\bf z},0}(x)$ satisfies
$r_0(x_i)=y_i$, and $y(x)$ satisfies $y(x_i)=y_i$.

\underline{Stage1: Dictionary-selection}: Select
$$
   \mathcal D_N:=\left\{{g_i}: i=1, \dots, N \right\}
$$
 with $\|g_i\|_n=1$.

 \underline{Stage2: Greedy definition:} Choose
 $g_{(k-1)s+1},\dots,g_{ks}\in\mathcal D_N$   according to
\begin{equation}\label{select s atom}
              \min_{i\in I_k}|\langle r^s_{{\bf z},k-1},g_i\rangle_n|\geq
              \sup_{g\in\mathcal D, g\neq g_i, i\in I_k}|\langle
              r^s_{{\bf z},k-1},g\rangle_n|,
\end{equation}
where $r^s_{{\bf z},k-1}$ is the residual defined by  $r^s_{{\bf
z},k-1}:=y-f^s_{{\bf z},k-1}$ and $I_k=[(k-1)s+1,ks]$.

\underline{Stage3. Iterative rule}: Let
 $V_{ks} =Span(g_1,...,g_{ks})$. Compute the $k$ step approximation $f^s_{{\bf z},k}$ as:
\begin{equation}\label{iOGA}
  {f^s_{{\bf z},k}} = {P_{{\bf z},V_{ks}}}({  y})
\end{equation}
and the residual: $r^s_{{\bf z},k}:=y-f^s_{{\bf z},k}$,
 where $P_{{\bf z},V_{ks}}$ is the orthogonal projection  onto
space $V_{ks}$
in the metric of $\langle\cdot,\cdot\rangle_n$.\\

\underline{Stage4. Stopping criterion   }: If $k=m$ then stop and
obtain the final estimator $f^s_{{\bf z},m}$, otherwise set $k=k+1$
and repeat Stage 1-Stage 4.
\end{algorithm}

It is shown in Algorithm 1 that the only difference between OSGA and
OGA learning \cite{Barron2008} is that in OSGA there are $s$ atoms
selected in the ``greedy-definition'' stage.  Therefore   the
computational burden of OGA is further reduced. The first result in
this subsection is to illustrate that such a reduction do not
degrade the generalization capability of OGA learning, provided the
dictionary is  incoherent and the step-size is appropriated tuned.

\begin{theorem}\label{learning1}
Let $f_{{\bf z},m}^s$ be defined as in Algorithm 1, and $\mathcal
D_N$ be a dictionary with coherence $M$. If $s\leq (2M)^{-1}+1$,
then for all functions $h$ in $\mbox{span}(\mathcal D_N)$, there
holds
$$
               E(\|\Pi_L f^s_{{\bf z},m} -f_\rho\|_\rho^2)\leq
               8\|h-f_\rho\|^2_\rho+\frac{108}{sm}\|h\|_{\mathcal
             L_{1,N}}^2+C\frac{ms\log n}{n},
$$
where $\Pi_L u:=\min\{L,|u|\}sgn(u)$ is the truncation operator at
level $L$,
$$
        \|f\|_{\mathcal
          L_{1,N}}:=\inf\{\sum_{i=1}^N|a_i|:f=\sum_{i=1}^Na_ig_i\},
$$
 and $C$ is a constant depending only on $f_\rho$ and $L$.
\end{theorem}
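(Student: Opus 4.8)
The plan is to follow the now-standard template for bounding the generalization error of a greedy learning scheme, namely the error-decomposition approach of Barron et al. \cite[Theorem 3.1]{Barron2008}, adapted to accommodate the super-greedy step-size $s$. First I would split the quantity $E(\|\Pi_L f^s_{{\bf z},m}-f_\rho\|_\rho^2)$ into an \emph{approximation} (or sample) error and an \emph{estimation} (or complexity) error. The natural device is to compare $\Pi_L f^s_{{\bf z},m}$ against an arbitrary target $h\in\mbox{span}(\mathcal D_N)$: writing $\|\Pi_L f^s_{{\bf z},m}-f_\rho\|_\rho^2 \le 2\|\Pi_L f^s_{{\bf z},m}-h\|_\rho^2 + 2\|h-f_\rho\|_\rho^2$ will not quite give the constant $8$, so instead I would run the argument through the empirical norm: on the sample, $f^s_{{\bf z},m}$ is exactly the OSGA output applied to the data function $y(\cdot)$ in the Hilbert space $(\mbox{span}(\mathcal D_N),\langle\cdot,\cdot\rangle_n)$, so Theorem~\ref{theorem1} applies verbatim with $H$ replaced by this empirical space. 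This yields the in-sample bound
\[
\|y-f^s_{{\bf z},m}\|_n^2 \le \|y-h\|_n^2 + \frac{27}{2}\|h\|_{\mathcal L_{1,N}}^2 (sm)^{-1}
\]
for every $h\in\mbox{span}(\mathcal D_N)$, which is the crucial place where the incoherence hypothesis $s\le(2M)^{-1}+1$ is consumed.

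Next I would convert this empirical estimate into a bound in the $\rho$-norm. The standard route is: (i) pass from $f^s_{{\bf z},m}$ to its truncation $\Pi_L f^s_{{\bf z},m}$, using that $\|\Pi_L f^s_{{\bf z},m}-y\|_n \le \|f^s_{{\bf z},m}-y\|_n$ since $|y_i|\le L$; (ii) invoke a uniform (over the data-dependent model classes) deviation inequality relating $\|\cdot\|_n$ and $\|\cdot\|_\rho$ on the truncated linear spans $\Pi_L V$ with $\dim V\le sm$. The number of candidate subspaces $V_{ks}$ that OSGA can select is at most $\binom{N}{sm}$, so a covering-number / Bernstein argument over this finite union of $sm$-dimensional spaces of truncated functions produces a term of order $\frac{sm\log N}{n}$ (or $\frac{sm\log n}{n}$ after absorbing $N$ into $n$, as is conventional when $N$ is polynomial in $n$), which is precisely the last term $C\frac{ms\log n}{n}$ in the statement. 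Combining (i), (ii), the in-sample bound, and $\mathcal E(h)-\mathcal E(f_\rho)=\|h-f_\rho\|_\rho^2$ from \eqref{equality}, while tracking constants carefully to land on $8$ in front of $\|h-f_\rho\|_\rho^2$ and $108 = 8\cdot\frac{27}{2}$ in front of the $\mathcal L_{1,N}$ term, finishes the argument.

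The main obstacle will be step (ii): the concentration estimate over the data-dependent family of truncated $sm$-dimensional models. Unlike a fixed hypothesis class, the subspace $V_{sm}$ produced by Algorithm~1 depends on the sample, so one must take a supremum over all $\binom{N}{sm}$ possible selections before concentrating; controlling the resulting empirical process requires a careful one-sided Bernstein inequality (of the form $\|g\|_\rho^2 \le 2\|g\|_n^2 + \varepsilon$ with high probability, uniformly over a suitable class) together with an accurate estimate of the $L^\infty$ covering number of $\{\Pi_L f : f\in V,\ \dim V\le sm\}$, which scales like $(sm)$ times a logarithmic factor. This is exactly the step carried out in \cite{Barron2008} for OGA, and I would adapt it by noting that all the relevant quantities depend only on the \emph{dimension} $sm$ of the final model and on the cardinality $N$ of the dictionary — not on the step-size $s$ per se — so the proof goes through essentially unchanged, with $N$ in place of the per-iteration count. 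A secondary, purely bookkeeping, difficulty is the constant chasing: one must feed the factor-$\frac{27}{2}$ from Theorem~\ref{theorem1} through the factor-$2$ (or factor-$4$/$8$) losses incurred at the truncation and concentration steps without slack, so as to reproduce the clean constants $8$ and $108$ rather than something larger.
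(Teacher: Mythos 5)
Your proposal is correct in its overall architecture and shares the paper's first half verbatim: both run Theorem \ref{theorem1} in the empirical Hilbert space $(\mathrm{span}(\mathcal D_N),\langle\cdot,\cdot\rangle_n)$ to get the in-sample oracle bound $\|y-f^s_{{\bf z},m}\|_n^2\leq\|y-h\|_n^2+\tfrac{27}{2}\|h\|_{\mathcal L_{1,N}}^2(sm)^{-1}$, and both observe that this is where the hypothesis $s\leq(2M)^{-1}+1$ is spent. Where you diverge is the estimation-error step. The paper does not redo the Barron--Cohen--Dahmen--DeVore concentration analysis: it simply views $f^s_{{\bf z},m}$ as the least-squares projection onto the selected $sm$-dimensional space $V_{sm}$ and invokes the ready-made oracle inequality of Gy\"orfi et al.\ (Lemma \ref{oracle}), which in one stroke delivers both the complexity term $CL^2\tfrac{sm\log n}{n}$ and the factor $8$ in front of $\min_{f\in V_{sm}}\|f_\rho-f\|_\rho^2$; feeding the empirical approximation bound into that minimum immediately produces $8\|h-f_\rho\|_\rho^2+\tfrac{108}{sm}\|h\|_{\mathcal L_{1,N}}^2$ with $108=8\cdot\tfrac{27}{2}$, exactly as you guessed. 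Your route --- a union bound over the at most $\binom{N}{sm}$ candidate subspaces, an $L^\infty$ covering-number estimate of the truncated spans, and a one-sided Bernstein inequality --- is the longer, from-scratch version of the same step, essentially reproducing \cite[Theorem 3.1]{Barron2008}. It costs several pages of empirical-process work and makes the constants harder to pin down, but it buys something real: Lemma \ref{oracle} formally allows $\mathcal F_n$ to depend on $x_1,\dots,x_n$ only, whereas the subspace $V_{sm}$ chosen by Algorithm 1 also depends on the $y_i$'s, so the paper's one-line invocation of that lemma is a shortcut that your explicit supremum over all possible selections makes rigorous. Either way the conclusion stands; just be aware that the paper's constant $8$ is inherited from the cited lemma rather than chased through a truncation/concentration cascade.
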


In Theorem \ref{learning1}, we propose a truncation operator on the
OSGA estimator. It should be noted that such a truncation operator
does not require any computation. Furthermore, as $y \in [-L,L]$,
it is easy to deduce \cite{Zhou2006} that
 $$
    \|\Pi_L f^s_{{\bf z},m}-f_\rho\|^2_\rho\leq \| f^s_{{\bf z},m}-f_\rho\|^2_\rho.
 $$
Theorem \ref{learning1} provides an oracle-type error estimate for
the OSGA learning, since the final error estimate can only be
deduced some oracle about the regression function. We further notice
that up to the constant, the deduced oracle inequality is   the same
as that deduced in \cite{Barron2008} with $k$ in \cite[Theorem
3.1]{Barron2008} replaced by $ms$ in Theorem \ref{learning1}.
Therefore, as a computational burden reduction version of OGA
learning, OSGA learning does not degrade the generalization
capability of OGA learning in the sense that they can obtain the
same learning rate.

To classify the learning rate, we should give some assumptions
(oracle) to the regression function. Along  \cite{Barron2008}'s
flavor, for $r>0$, we define the space $\mathcal L_1^r$ as the set
of all functions $f$ such that, for all $N$, there exists
$h\in\mbox{span}\{D_N\}$ satisfying
\begin{equation}\label{prior}
           \|h\|_{\mathcal L_1}\leq\mathcal B, \ \mbox{and}\
           \|f-h\|_\rho\leq \mathcal BN^{-r}.
\end{equation}
The infimum of all such $\mathcal B$ defines a norm (for $f$ ) on
$\mathcal L_1^r$. Furthermore, let $a\geq 1$ is fixed  we assume
that the size of dictionary, $N$, satisfies $
           N\sim n^a$.
If $f_\rho\in\mathcal L_1^r$,  we can deduce the following learning
rate estimate of OSGA learning.

\begin{corollary}\label{corollary3}
Suppose that the assumptions of Theorem \ref{learning1} holds. If
$f_\rho\in\mathcal L_1^r$, $N\sim n^a$ and $a\geq\frac1{4r}$, then
we can choose $m$ satisfying $m\sim \frac{n^{1/2}}{s}$ such that
$$
               E(\|\Pi_Lf^s_{{\bf z},m} -f_\rho\|_\rho^2)\leq
               C\left(n/\log n\right)^{-1/2}.
$$
where $C$ is a constant depending only on  $L$ and $f_\rho$.
\end{corollary}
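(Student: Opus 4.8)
The plan is to invoke Theorem \ref{learning1} and then balance the three terms on the right‐hand side by a judicious choice of $m$ and $N$. First I would observe that the hypothesis $f_\rho\in\mathcal L_1^r$ guarantees, for the dictionary $\mathcal D_N$ of size $N$, the existence of an $h\in\mathrm{span}(\mathcal D_N)$ with $\|h\|_{\mathcal L_{1,N}}\le\mathcal B$ and $\|h-f_\rho\|_\rho\le\mathcal B N^{-r}$. (Here one must make sure the $\mathcal L_{1,N}$‐norm used in Theorem \ref{learning1} is controlled by the $\mathcal L_1$‐norm appearing in the definition \eqref{prior}; since the $g_i$ are the first $N$ atoms this is immediate.) Substituting this $h$ into the oracle inequality of Theorem \ref{learning1} yields
\begin{equation}\label{cor-sub}
   E(\|\Pi_L f^s_{{\bf z},m}-f_\rho\|_\rho^2)\le 8\mathcal B^2 N^{-2r}+\frac{108\mathcal B^2}{sm}+C\frac{ms\log n}{n}.
\end{equation}

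Next I would choose the free parameters. With $N\sim n^a$ and $a\ge\frac1{4r}$ we get $N^{-2r}\lesssim n^{-2ra}\le n^{-1/2}$, so the approximation term is already of the desired order $(n/\log n)^{-1/2}$ and can be discarded from the optimization. It remains to balance the estimation term $\frac{1}{sm}$ against the sample‐error term $\frac{ms\log n}{n}$. Setting $\frac{1}{sm}\sim\frac{ms\log n}{n}$ forces $(sm)^2\sim n/\log n$, i.e. $sm\sim (n/\log n)^{1/2}$; taking $m\sim n^{1/2}/s$ (as stated in the corollary, possibly up to the logarithmic factor which can be absorbed) makes both terms $\lesssim (\log n)^{1/2} n^{-1/2}\lesssim (n/\log n)^{-1/2}$ after adjusting constants, and in fact the cleaner choice $m\sim s^{-1}(n/\log n)^{1/2}$ gives exactly $\frac{1}{sm}\sim (n/\log n)^{-1/2}$ and $\frac{ms\log n}{n}\sim (n/\log n)^{-1/2}$. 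Collecting the three contributions gives the claimed bound with a constant $C$ depending only on $L$ and $f_\rho$ (through $\mathcal B$).

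The only genuinely delicate point — and the step I would treat most carefully — is the internal consistency of the parameter choices: one needs $m$ to be a positive integer with $m\to\infty$, one needs the step size $s$ to remain fixed and to satisfy $s\le(2M)^{-1}+1$ independently of $n$ (so that Theorem \ref{learning1} applies at every sample size), and one needs the $\log n$ factor arising from $sm\sim(n/\log n)^{1/2}$ versus the stated $m\sim n^{1/2}/s$ to be reconciled — this is why the final rate is expressed in terms of $(n/\log n)^{-1/2}$ rather than $n^{-1/2}$. Everything else is the routine substitution and term‐balancing sketched above. I would close by remarking that the exponent $\tfrac12$ is exactly the OGA rate of \cite[Theorem 3.1]{Barron2008} with the effective number of atoms $sm$ in place of $k$, confirming that the $s$‐fold reduction in iterations costs nothing in the learning rate.
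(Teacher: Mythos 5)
Your proposal is correct and is exactly the argument the paper intends (the paper states this corollary without proof): substitute the oracle $h$ guaranteed by $f_\rho\in\mathcal L_1^r$ into Theorem \ref{learning1}, note that $a\ge\frac1{4r}$ makes $N^{-2r}\lesssim n^{-1/2}$, and balance $\frac{1}{sm}$ against $\frac{ms\log n}{n}$. You are also right to flag that the literal choice $sm\sim n^{1/2}$ gives $n^{-1/2}\log n$ rather than $(n/\log n)^{-1/2}$, so the clean statement really requires $sm\sim(n/\log n)^{1/2}$; this is a (harmless) imprecision in the corollary as stated, not a gap in your argument.
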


It should be highlighted that the main difficulty of OSGA learning
is to select an appropriate iteration number, $m$. Corollary
\ref{corollary3} proposes a strategy of selecting the best $m$, but
the main flaw is that such a choice depends heavily  on the prior
$f_\rho\in \mathcal L_1^r$. In practice, it is usually impossible
to verified. Thus, we turn to pursue a universal strategy to fix
$m$. Hence, we use the same $l^0$ complexity regularization strategy
as that in \cite{Barron2008} to choose $m$.

We define the estimator $\hat{f}=\Pi f_{{\bf z},m^*}$, where $m^*$is
chosen to minimize (over all $m>0$) the penalized empirical risk
\begin{equation}\label{selection k}
       \|y-\Pi_L f^s_{{\bf z},m}\|_n^2+\kappa\frac{ms\log n}{n},
\end{equation}
with $\kappa$ a constant depending only on $L$ and $a$.

Noting Theorem \ref{theorem1}, using the almost same  method as that
in \cite[Theorem 3.1]{Barron2008} we can deduce the following
Theorem \ref{complexity bound}.

\begin{theorem}\label{complexity bound}
If the assumptions in Theorem \ref{learning1} hold, then there
exists a $\kappa $ depending only on $L$ and $a$ such that for all
$m>0$ and $h\in\mbox{span}(\mathcal D_N)$, there holds
$$
               E(\|\hat{f} -f_\rho\|_\rho^2)\leq
               8\|h-f_\rho\|^2_\rho+\frac{108}{sm}\|h\|_{\mathcal
             L_{1,N}}^2+C\frac{ms\log n}{n},
$$
where $C$ is a constant depending only on $\kappa$ and $L$.
\end{theorem}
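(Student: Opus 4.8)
The plan is to follow the structure of the proof of \cite[Theorem 3.1]{Barron2008}, replacing the quantities attached to an OGA estimator with $k$ atoms by those attached to the OSGA estimator $f^s_{{\bf z},m}$ with $ms$ atoms, and then to invoke Theorem \ref{theorem1} to supply the approximation ingredient that in \cite{Barron2008} was supplied by the OGA error bound \eqref{oga}. Concretely, I would first set up the usual decomposition of the excess risk $E(\|\hat f - f_\rho\|_\rho^2)$ into a sample/estimation term controlled by the $l^0$ complexity penalty and an approximation term. The $l^0$ penalty here is $\kappa\,\frac{ms\log n}{n}$, so the family of candidate models is indexed by $m>0$, and the effective dimension of the $m$-th model is $ms$ (the dimension of $V_{ms}$). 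The key observation is that, although OSGA chooses $s$ atoms at once, once the atoms are fixed the estimator \eqref{iOGA} is just the empirical least–squares projection onto an $ms$-dimensional span picked from the $\binom{N}{ms}$ possible subspaces — exactly the combinatorial structure handled in \cite{Barron2008} — so the model–selection machinery (a covering/metric–entropy argument combined with Bernstein-type concentration for the truncated estimator) carries over verbatim with $k$ replaced by $ms$ and with $N\sim n^a$ controlling the $\log$ factor.

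Next I would extract from Theorem \ref{learning1}, or rather re-derive in the empirical norm, the in-sample approximation estimate that plays the role of \eqref{main error}: for any $h\in\mathrm{span}(\mathcal D_N)$ and $s\leq (2M)^{-1}+1$,
\begin{equation}\label{emp-approx}
  \|r^s_{{\bf z},m}\|_n^2 \le \|y-h\|_n^2 + \frac{27}{2sm}\,\|h\|_{\mathcal L_{1,N}}^2 .
\end{equation}
This is the empirical-norm version of \eqref{main error}: the OSGA iteration in Algorithm 1 is literally the abstract OSGA of Section 2 run in the Hilbert space $(\mathrm{span}(\mathcal D_N),\langle\cdot,\cdot\rangle_n)$ with the dictionary $\{g_i\}$, whose coherence in the empirical inner product is still $\le M$ by hypothesis, so Theorem \ref{theorem1} applies pointwise on the sample. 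Then, combining \eqref{emp-approx} with the oracle inequality produced by the penalized model selection — which says $E(\|\hat f-f_\rho\|_\rho^2)$ is bounded by a constant times $\inf_m\{ \text{(bias of best $m$-model)} + \kappa\,\frac{ms\log n}{n}\}$ plus lower–order terms — and finally passing from the empirical bias $\|y-h\|_n^2 + \frac{27}{2sm}\|h\|_{\mathcal L_{1,N}}^2$ back to $\|h-f_\rho\|_\rho^2$ (using $E\|y-h\|_n^2 = \|h-f_\rho\|_\rho^2 + \mathcal E(f_\rho)$ and the relation \eqref{equality}, with the constant $8$ absorbing the multiplicative loss from the oracle inequality), yields the stated bound with $C$ depending only on $\kappa$ and $L$.

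The main obstacle, as in \cite{Barron2008}, is the concentration/model-selection step: controlling the deviation between the empirical risk $\|y-\Pi_L f^s_{{\bf z},m}\|_n^2$ and the true risk uniformly over the data-dependent family of $ms$-dimensional projections, and showing that the penalty $\kappa\,\frac{ms\log n}{n}$ is large enough (for a suitable $\kappa=\kappa(L,a)$) to dominate this deviation. Here one needs a metric-entropy bound for the class $\{\Pi_L P_{{\bf z},V}(y): V=\mathrm{span}\text{ of }ms\text{ atoms from }\mathcal D_N\}$; the cardinality $\binom{N}{ms}\le N^{ms}$ together with $N\sim n^a$ is what produces the $\log n$ factor and forces $\kappa$ to depend on $a$. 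Since Algorithm 1 differs from OGA learning only in that atoms are committed in blocks of $s$, this combinatorial bound is no worse than for OGA (indeed the number of admissible subspaces is smaller), so the argument of \cite[Theorem 3.1]{Barron2008} goes through with only notational changes; I would therefore present this step by reference rather than reproving the concentration inequalities in detail. The two genuinely new inputs are (i) that the empirical coherence is still $\le M$ so Theorem \ref{theorem1} is applicable on the sample, and (ii) bookkeeping the factor $s$ consistently through the dimension count and the penalty, both of which are straightforward.
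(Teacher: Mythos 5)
Your proposal follows exactly the route the paper intends: the paper itself omits the proof of Theorem \ref{complexity bound}, stating only that it follows ``using the almost same method as that in \cite[Theorem 3.1]{Barron2008}'' with Theorem \ref{theorem1} supplying the approximation ingredient, which is precisely your plan (including the correct bookkeeping $8\times\frac{27}{2}=108$ and the observation that Theorem \ref{theorem1} applies in the empirical inner product since the coherence hypothesis on $\mathcal D_N$ is stated for $\langle\cdot,\cdot\rangle_n$). Your write-up is in fact more explicit than the paper about the two points that need checking (the empirical-norm applicability of Theorem \ref{theorem1} and the $\binom{N}{ms}$ entropy count forcing $\kappa=\kappa(L,a)$), but it is the same argument.
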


For the sake of brevity, we omit the proof of Theorem
\ref{complexity bound}. We refer the readers to the proof of Theorem
3.1 in \cite{Barron2008} for the details.   If some assumptions are
added to the regression function $f_\rho$, then we can also deduce
the following learning rate estimate.

\begin{corollary}\label{corollary1}
If $f_\rho\in\mathcal L_1^r$, $a\geq\frac1{4r}$ and the assumptions
of Theorem \ref{complexity bound} holds, there exists a $\kappa $
depending only on $L$ and $a$ such that for all $m>0$ and
$h\in\mbox{span}(\mathcal D_N)$,
$$
               E(\|\hat{f} -f_\rho\|_\rho^2)\leq
               C\left(n/\log n\right)^{-1/2}.
$$
where $C$ is a constant depending only on $\kappa$, $L$ and
$f_\rho$.

\end{corollary}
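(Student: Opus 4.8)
The plan is to obtain Corollary \ref{corollary1} as an immediate consequence of the oracle inequality in Theorem \ref{complexity bound} by making the two free choices on its right-hand side — the comparison function $h\in\mbox{span}(\mathcal D_N)$ and the bookkeeping iteration number $m$ — so as to balance the three error contributions. The point to keep in mind is that the estimator $\hat f$ itself does not depend on $m$: the index $m^*$ entering $\hat f$ is selected by minimizing the penalized empirical risk (\ref{selection k}), so Theorem \ref{complexity bound} already asserts that $\hat f$ performs, up to constants, as well as the best trade-off over $m$. Hence the whole task reduces to exhibiting one good pair $(h,m)$ and estimating each term.

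First I would use the hypothesis $f_\rho\in\mathcal L_1^r$. By the definition (\ref{prior}) there is $h\in\mbox{span}(\mathcal D_N)$ with $\|h\|_{\mathcal L_{1,N}}\leq\mathcal B$ and $\|f_\rho-h\|_\rho\leq\mathcal B N^{-r}$, where $\mathcal B=\|f_\rho\|_{\mathcal L_1^r}$; here one uses that the witness $h$ is a finite combination of $g_1,\dots,g_N$, so the bound on its $\mathcal L_1$-norm is realized by an admissible $\mathcal L_{1,N}$-representation. Inserting this $h$ into Theorem \ref{complexity bound} gives
$$
 E(\|\hat f-f_\rho\|_\rho^2)\leq 8\mathcal B^2 N^{-2r}+\frac{108\mathcal B^2}{sm}+C\frac{ms\log n}{n}.
$$
Since $N\sim n^a$ with $a\geq\frac1{4r}$, we have $N^{-2r}\sim n^{-2ar}\leq n^{-1/2}\leq (n/\log n)^{-1/2}$, so the first term is already of the claimed order. (The standing assumption $a\geq1$ also guarantees that $N\sim n^a$ eventually exceeds any $sm$ chosen below, so the algorithm can indeed select $sm$ atoms from $\mathcal D_N$.)

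It remains to handle the last two terms. I would choose the positive integer $m$ so that $sm\sim(n/\log n)^{1/2}$, i.e. $m\sim \frac1s(n/\log n)^{1/2}$, rounded up to the nearest integer; this is $\geq1$ once $n$ is large, and the rounding alters each of the two terms only by a bounded factor. Then $\frac1{sm}\sim(n/\log n)^{-1/2}$ and $\frac{ms\log n}{n}\sim(n/\log n)^{-1/2}$, so all three contributions are $O\big((n/\log n)^{-1/2}\big)$ with a constant depending only on $\mathcal B=\|f_\rho\|_{\mathcal L_1^r}$ (hence only on $f_\rho$), on $\kappa$, and on $L$. This is exactly the asserted estimate.

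There is no genuine obstacle here — once Theorem \ref{complexity bound} is available the argument is a one-line optimization — and the only points needing a word of care are (i) reconciling the $\|\cdot\|_{\mathcal L_1}$ appearing in the definition (\ref{prior}) of $\mathcal L_1^r$ with the $\|\cdot\|_{\mathcal L_{1,N}}$ appearing in Theorem \ref{complexity bound}, which is harmless because the witness $h$ lies in $\mbox{span}(\mathcal D_N)$, and (ii) checking that the integer rounding of the balancing $m$ and the requirement $sm\leq N$ cost nothing, both of which follow from $N\sim n^a$ with $a\geq1$. I would close by noting that this is precisely the computation carried out for OGA in \cite{Barron2008} with the number of atoms $k$ there replaced by $sm$, which is the sense in which OSGA attains the same learning rate while reducing the number of iterations (and hence orthogonal projections) by a factor $s$.
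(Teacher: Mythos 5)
Your proposal is correct and follows exactly the route the paper intends (the paper omits this proof, deferring to \cite{Barron2008}): substitute the witness $h$ furnished by the assumption $f_\rho\in\mathcal L_1^r$ into the oracle inequality of Theorem \ref{complexity bound}, use $N\sim n^a$ with $a\geq\frac{1}{4r}$ to kill the bias term, and balance the remaining two terms in $m$. One small point worth noting: your choice $sm\sim(n/\log n)^{1/2}$ is in fact the correct balancing for the stated rate, whereas the choice $m\sim n^{1/2}/s$ written in the paper's analogous Corollary \ref{corollary3} leaves a variance term of order $n^{-1/2}\log n$, which exceeds $(n/\log n)^{-1/2}$ by a factor $(\log n)^{1/2}$ --- so your version is, if anything, slightly more careful than the paper's.
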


Corollary \ref{corollary1} together with \cite[Corollary
3.6]{Barron2008} shows that OSGA does not degenerate the learning
performance when compared with   OGA by using the $l^0$ complexity
regularization strategy to fix the iteration number. However, it has
already been pointed out in \cite[Remark 3.5]{Barron2008} that
$\kappa$ should satisfy $\kappa\geq 2568L^4(a+5)$. Such a
pessimistic estimate makes the $l^0$ complexity regularization
strategy (\ref{selection k}) always infeasible. In practice, this
may result in selecting a too small value for $m^*$. Many of the
programmers' spirit will be dampened by this restriction, and shy
away from running OSGA for large $m$. Therefore, the value of the
previous results is only to classify the theoretical feasibility. To
facilitate the use of OSGA, we need to find another strategy to
choose $m$ rather than the $l^0$ complexity regularization
(\ref{selection k}).  A widely used approach is the so-called
``cross-validation'' \cite[Chapter 8]{Gyorfi2002}, which has also
proposed for OGA learning by Barron et al. in \cite{Barron2008}.

%
%

\section{Simulation Supports}

In this section, we present several toy simulations to illustrate
the feasibility,   effectiveness, and   efficiency of   OSGA
learning. The main purpose can be divided into three aspects. The
first one is to reveal that there exists a relationship between the
``dictionary-selection'' stage and  ``greedy-definition'' stages for
greedy learning. Since the incoherence assumption is too strict to
describe the property of the dictionary and difficult to verify,
especially for supervised learning \cite{Mougeot2012}, we do not
implement the simulation for dictionaries with such a pessimistic
  assumption. Instead, we utilize two widely used
dictionaries such as the trigonometric polynomial dictionary and
Gaussian radial basis function dictionary to justify our viewpoint.
The second one is to analyze the pros and cons of OSGA learning. In
particular, we compare both the training time and test time between
OSGA learning and OGA learning with different dictionaries for
different regression functions. The last one is to compare the
performance of OSGA with other typical dictionary learning strategy
such as  the OGA learning \cite{Barron2008}, Lasso
\cite{Tibshirani1995}, ridge regression \cite{Scholkopf2001}, bridge
regression \cite{Daubechies2010} (for example, the half coefficient
regularization \cite{Xu2012}) and greedy boosting
\cite{Friedman2001}. It should be noted that the aim of this
simulation is to compare different learning strategies for fixed
dictionaries, thus we only employ two fixed dictionaries rather than
pursuing the best dictionary. Given the dictionary, we then analyze
  different performances of the aforementioned different learning
schemes.

\subsection{Experiment Setup}
\emph{Data sets:} we employ two regression functions $f_{\rho }$ as
 \begin{equation*}
               f_1(x)={\rm sinc}(40x-10)+{\rm sinc}(60x-30)+{\rm sinc}(20x-1)+\cos(10x),
 \end{equation*}
where ${\rm sinc} (t)=\frac{\sin t}{t}$, and
 \begin{equation*}
        f_2(x)=\left\{\begin{array}{cc}
        1/3-x, & \mbox{if}\ 0\leq x<1/3,\\
         x^2, & \mbox{if}\  1/3\leq x \leq 2/3;\\
          -1, & 2/3<x\leq 1.\end{array}\right.
 \end{equation*}
It is easy to see that $f_1$ is an infinitely differential function
and $f_2$ is a discontinuous function. We generated the training
sample set ${\bf z}=\{(x_{i},y_{i})\}_{i=1}^{5000}$ through
independently and randomly sampling $x_{i}$ from $\mathcal{U}(0,1)$
, and the corresponding $y_{i}^{\prime }s$ to be $y_{i}=f_{\rho
}(x_{i})+\epsilon $, with $\varepsilon \sim N(0,0.1)$ being the
white noise. The learning performances of different algorithms
were then tested by applying the resultant estimators to the test set ${\bf z}_{test}=%
\{(x_{i}^{(t)},y_{i}^{(t)})\}_{i=1}^{5000}$ which was generated
similarly to ${\bf z}$ but with a promise that $y_{i}^{\prime }s$ were always taken to be $%
y_{i}^{(t)}=f_{\rho }(x_{i}^{(t)}).$

\emph{Dictionary:} In each simulation, we adopt two types of
dictionaries. The first one is constructed by the Gaussian radial
basis function: $\exp\{-\sigma|x-t_i|^2\}$ with
$\{t_i\}_{i=1}^{500}$ being drawn identically and independently from
$\mathcal{U}(0,1)$ and appropriately selected $\sigma$. Indeed, we
set $\sigma=200$ for $f_1$ and $\sigma=1000$ for $f_2.$ The other
one is constructed by the trigonometric polynomial: $\cos kt$ with
$k\in\{1,2,...,500\}$.

\emph{Methods:} For OSGA and OGA learning, we applied $QR$
decomposition to acquired the least squared estimates
\cite{Sauer2006}. For greedy boosting, we used the L2boost algorithm
\cite{Friedman2001} with the step size $0.0005$. For $L_2$
coefficient regularization (or ridge regression), we use its
analytic regularized least square solution \cite{Cucker2001}. For
$L_1$ (or lasso) and $L_{1/2}$ coefficient regularization schemes,
we utilize the iterative soft \cite{Daubechies2004} and half
\cite{Xu2012} thresholding algorithms to obtain the corresponding
estimators, respectively.

\emph{Measurements:} Since the aim of the toy simulations is to
justify the feasibility of OSGA, we don't access any concrete
parameter-selection strategies. Therefore, we draw our conclusion in
the basis of the most appropriate parameters. Under this
circumstance, we do not divide the training set into training data
and validation data and use validation data to choose parameters as
in \cite{Gyorfi2002}, instead, we use the test set to fix parameters
directly. To be detailed, the iteration steps for OSGA, OGA and
greedy boosting, and the regularization parameters $\lambda \in
\{2^{-10},2^{-9},...,2^{10}\}$ for $L_q$ coefficients regularization
with $q=1/2,1,2$ are selected when the prediction error on the test
data is minimized.
We recorded the rooted mean squared error (RMSE) of test error, the
sparsity of the coefficients, and the training time under the selected parameters to measure the
performances of the mentioned learning schemes.

\emph{Environment:} All the simulations and experiments   were
conducted in Matlab R2013a on a desktop computer with Windows
7/Intel(R)/Core(TM) i7-3770K RAM and 3.50GHz CPU, and the statistics
were averaged based on 100 independent trials.

\subsection{The relationship between dictionary-selection and greedy-definition}

Theorem \ref{learning1} theoretically presents that if the
relationship between the ``dictionary-selection'' and
``greedy-definition'' stages is considered, then the efficiency of
greedy learning can be essentially improved. However, such a
theoretical result is built on the incoherence property of the
dictionary. As is shown in \cite{Mougeot2012}, the incoherence
assumption  in the background of supervised learning is too strict
to describe the property of dictionaries. We guess that there may
exist a much looser measurement than it within our purpose. To
verify this conjecture, we employ both trigonometric polynomial
dictionary (TPD) and Gaussian radial basis dictionary (GRD) to be
the carriers of OSGA. It can be found in \cite{Bass2005} that the
TPD dictionary together with the random samples can develop  a
well-conditioned sampling matrix \cite[Theorem 5.1]{Bass2005}, while
the sampling matrix constructed by GRD is usually ill-conditioned
\cite{Minh2010}.  We compare  the TPD and GRD for OSGA learning, so
as to experimentally study how the ``dictionary-selection'' stage
influences the ``greedy-definition'' stage. Fig.\ref{fig_sinc} and
Fig.\ref{fig_piece} summarize the learning rate of OSGA for the
continuous regression function $f_1$ and the discontinuous
regression function $f_2$, respectively.
\begin{figure}[!t]
\begin{minipage}[b]{.45\linewidth}
\includegraphics*[scale=0.45]{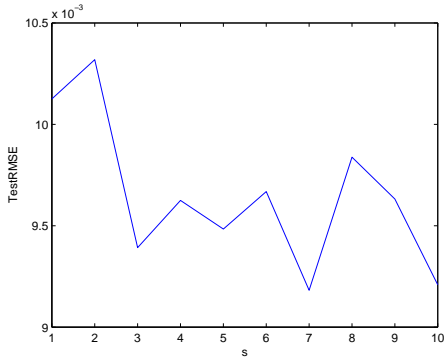}
\centerline{{\small (a) RMSE as a function of step-size $s$ (TPD)   }}
\end{minipage}
\hfill
\begin{minipage}[b]{.45\linewidth}
\includegraphics*[scale=0.45]{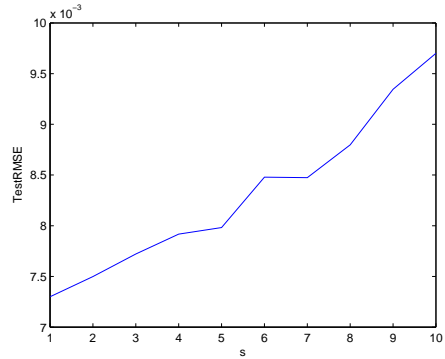}
\centerline{{\small (b) RMSE as a function of step-size $s$ (GRD)  }}
\end{minipage}
\hfill \caption{ Learning performance of OSGA for continuous
regression function $f_1$}
\label{fig_sinc}
\end{figure}

\begin{figure}[!t]
\begin{minipage}[b]{.45\linewidth}
\includegraphics*[scale=0.45]{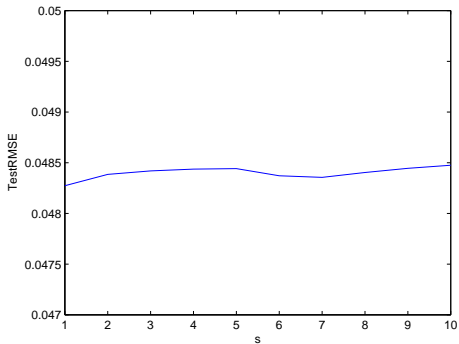}
\centerline{{\small (a) RMSE as a function of step-size $s$ (TPD)   }}
\end{minipage}
\hfill
\begin{minipage}[b]{.45\linewidth}
\includegraphics*[scale=0.45]{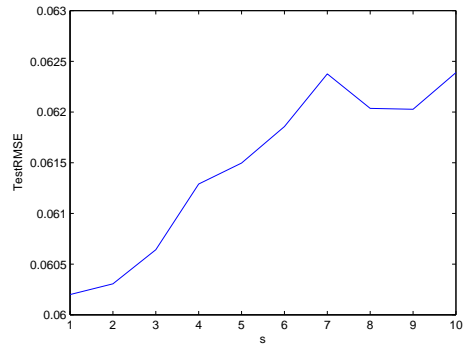}
\centerline{{\small (b) RMSE as a function of step-size $s$ (GRD)  }}
\end{minipage}
\hfill \caption{ Learning performance of OSGA for discontinuous
regression function $f_2$}
\label{fig_piece}
\end{figure}

 It is seen from Fig.\ref{fig_sinc}(a) that when the TPD is selected, the RMSE does not
 increase as the step-size $s$ increases. However, Fig.\ref{fig_sinc}(b) shows
that when the GRD is selected, the RMSE increases obviously. A
similar trend can also be observed when applying the OSGA on the
discontinuous regression function $f_2$, as shown in
Fig.\ref{fig_piece}. From an experimental viewpoint, such
differences demonstrate that a dictionary with certain good
properties  do help to redefine what is greedy in the
``greedy-definition'' stage and improve the efficiency of greedy
learning  while a ``bad'' dictionary can not bring such benefits.
All these simulations reveal that there does exist a relationship
between the ``dictionary-selection'' and``greedy-definition''
stages.

It should be noted from Fig.\ref{fig_sinc} (b) and
Fig.\ref{fig_piece} (b) that even for the GRD, OSGA does not degrade
the generalization capability of OGA very much. Indeed, taking
Fig.\ref{fig_sinc} for example, from $s=1$ to $s=10$, the RMSE only
increases from 0.0073 to 0.0097. This phenomenon shows that at the
 cost of a small loss of generalization capability, OSGA   provides a
possibility to reduce the computation burden of OGA learning, even
for some ``bad'' dictionaries. The main reason of this phenomenon,
from our point of view, can be stated as follows. As   $s$
increases, due to the high-correlation of the atoms of GRD, the
selected atoms via (\ref{select s atom}) are high-correlated to the
residual $r_{{\bf z},k-1}^s,$ so they are high-correlated with each
other. Therefore, the approximation capability of the spanned space
$V_{sk}$ are not much better than $V_{s(k-1)+1}$, where
$V_{s(k-1)+1}$ denotes the span of dictionary in which only one atom
is added to $V_{s(k-1)}=\{g_1,\dots,g_{s(k-1)}\}$ according to the
classical greedy definition. However, as the atoms of $V_{sk}$ is
more than that of $V_{s(k-1)+1}$, and the high-correlation of GRD,
the capacity of $V_{sk}$  is only larger than that of $V_{s(k-1)+1}$
to a limited extent. Thus, according to the known bias and variance
trade-off principle \cite{Cucker2007}, the bias decreases a little
while the variance increases a little, which makes the final
generalization error varies only a little. As a consequence, more
atoms are required to reach a good prediction as compared with OGA.

\subsection{The pros and cons  of OSGA Learning}

The main motivation to introduce OSGA to tackle supervised learning
problem is that OSGA can reduce the computational burden of OGA,
provided the dictionary possess some prominent property. The main
purpose of this series of simulations is to verify this pros of OSGA
learning. Furthermore, we also experimentally analyze the cons of
OSGA learning. To this end, we aim to compare both the training time
and test time of OSGA learning with different step-size, $s$. As the
test time only depends on the sparsity of the coefficients of the
deduced estimator, we record both the training time
(Fig.\ref{fig_time}) and sparsity of the estimator
(\ref{fig_sparsity}) as a function of $s$.

It can be found in Fig.\ref{fig_time} that,  to deduce the OSGA
estimator, the training time monotonously decreases with respect to
$s$. This implies that as far as the training time is concerned,
OSGA learning outperforms than OGA learning. The reason is that
OSGA learning   can skip many least square estimation for a large
$s$, as compared with the standard OGA.  This conclusion is regarded
as the main pros of OSGA learning. On the other hand, as shown in
Fig.\ref{fig_sparsity}, the number of the selected atoms of the OSGA
estimator may be a bit larger than OGA, which is witnessed in
Fig.\ref{fig_sparsity} (a), (b), (d). Under this circumstance, the
test cost of OSGA learning is larger than that of OGA learning,
which can be considered as the main cons of OSGA learning.

\begin{figure}[!t]
\begin{minipage}[b]{.45\linewidth}
\includegraphics*[scale=0.45]{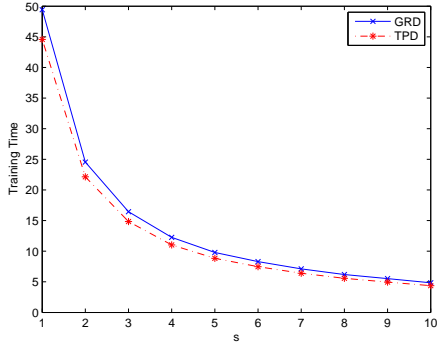}
\centerline{{\small (a) Training time for $f_1$   }}
\end{minipage}
\hfill
\begin{minipage}[b]{.45\linewidth}
\includegraphics*[scale=0.45]{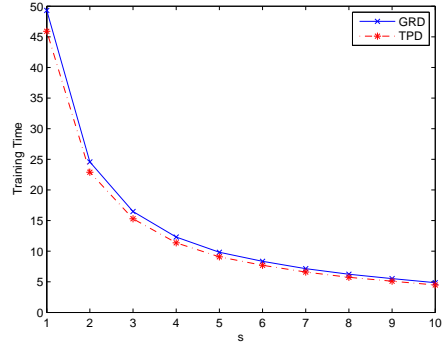}
\centerline{{\small (b) Training time for $f_2$   }}
\end{minipage}
\hfill \hfill \caption{Comparison of training time }
\label{fig_time}
\end{figure}

\begin{figure}[!t]
\begin{minipage}[b]{.45\linewidth}
\includegraphics*[scale=0.45]{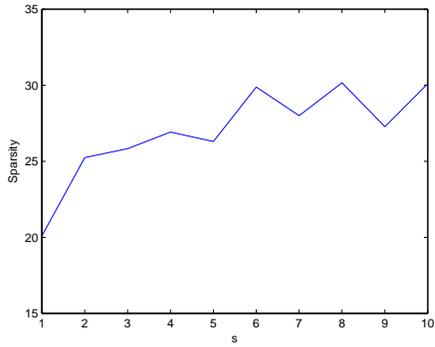}
\centerline{{\small (a) TPD  for $f_1$     }}
\end{minipage}
\hfill
\begin{minipage}[b]{.45\linewidth}
\includegraphics*[scale=0.45]{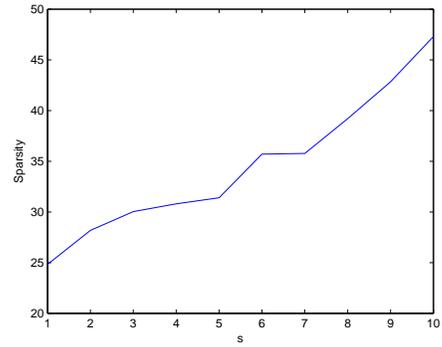}
\centerline{{\small (b)   GRD for $f_1$ }}
\end{minipage}
\hfill
\begin{minipage}[b]{.45\linewidth}
\includegraphics*[scale=0.45]{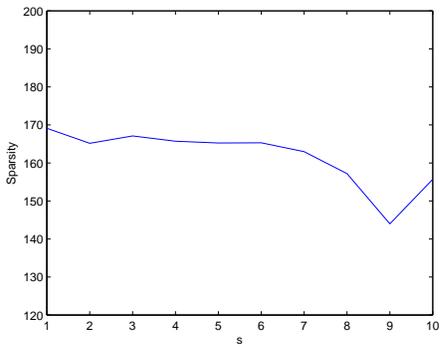}
\centerline{{\small (c)   TPD for $f_2$   }}
\end{minipage}
\hfill
\begin{minipage}[b]{.45\linewidth}
\includegraphics*[scale=0.45]{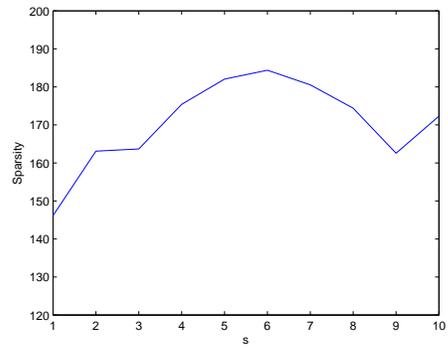}
\centerline{{\small (d)   GRD for $f_2$ }}
\end{minipage}
\hfill \caption{ The coefficient sparsity as a function of $s$  }
\label{fig_sparsity}
\end{figure}

The above two simulations only take the computational burden for
OSGA into account. We further do the following simulations to
consider both the computational burden and generalization
capability. Since the sparsity of the OSGA estimator is the product
of the iteration number $m$ and step-size $s$,   the training time
also depends heavily on the sparsity. Therefore, we employ a
simulation by setting RMSE as a function of the sparsity, $k$. Such
a simulation, shown in Fig.\ref{fig_detail} presents a summary of
OSGA learning. It is seen that the number of atoms required for the
smallest RMSE, may grow  as the step-size $s$ increases (see (a),
(b), (d) in Fig.\ref{fig_detail}). Furthermore,  the obtained least
RMSE for different $s$ varies very little. These show  that OSGA
learning can reduce the computational burden of OSGA without
sacrificing the generalization capability very much, and the   price
(or risk) to do such a reduction is that the test time may increase.
It should be highlighted in Fig.\ref{fig_detail} that, for the well
developed dictionary, TPD, and a suitable regression function, all
the training time, test time and RMSE can be reduced by utilizing
OSGA. The reason for these phenomenons is similar as that  presented
in the bottom of the last subsection. For ``bad'' dictionary such as
GRD,
 the bias decreases a little while the variance
increases a little as $s$ increases. Due to the high-correlation of
GRD, the quantity of decreased bias is smaller than that of
increased variance. Thus, it requires more atoms and makes the
generalization error a little larger. For a ``good'' dictionary such
as TPD, and an appropriate regression function, the quantity of
decreased bias can be larger than that of increased variance, which
leads to both smaller sparsity of the estimator and less
generalization error.

\begin{figure}[!t]
\begin{minipage}[b]{.45\linewidth}
\includegraphics*[scale=0.45]{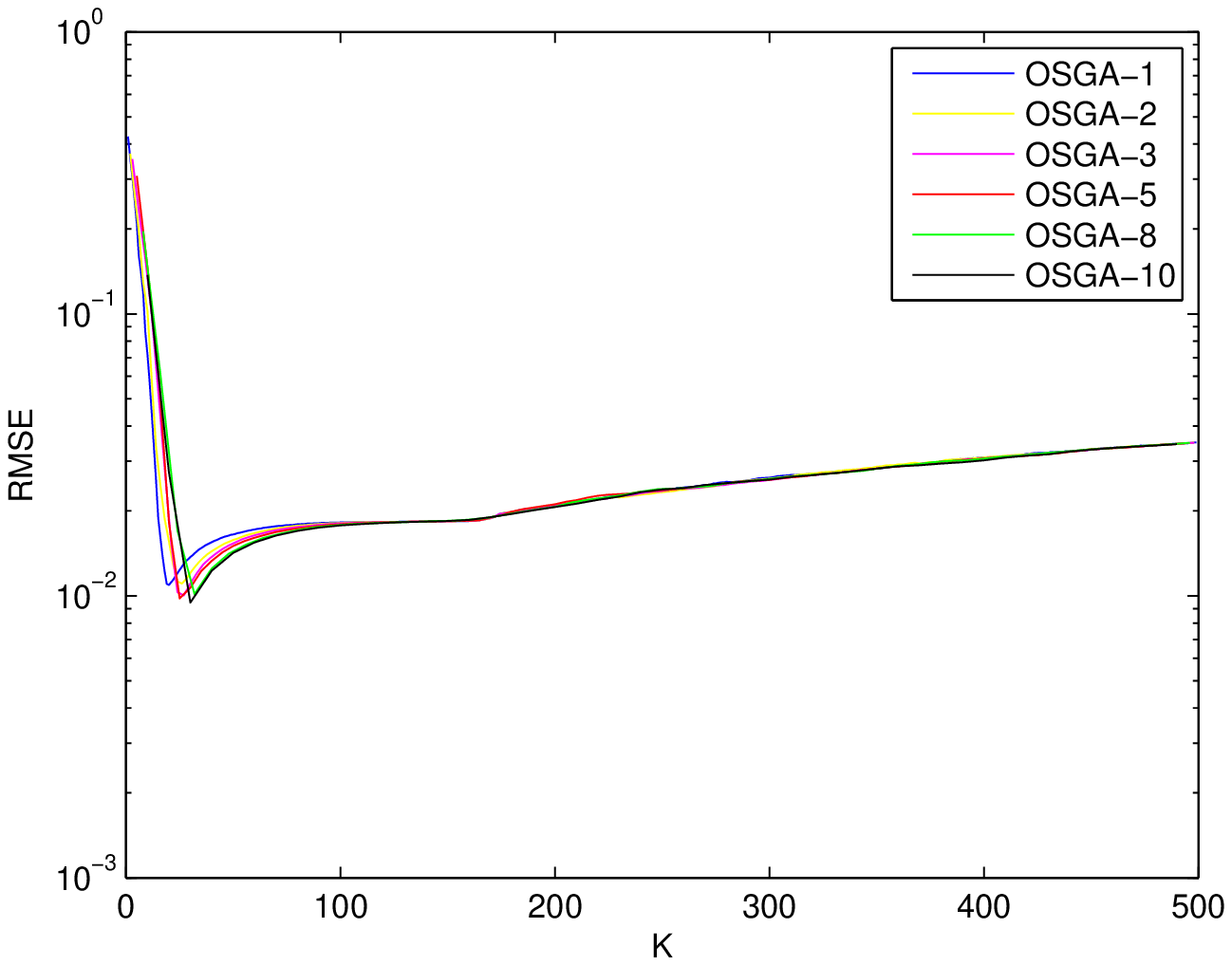}
\centerline{{\small (a)  TPD for $f_1$   }}
\end{minipage}
\hfill
\begin{minipage}[b]{.45\linewidth}
\includegraphics*[scale=0.45]{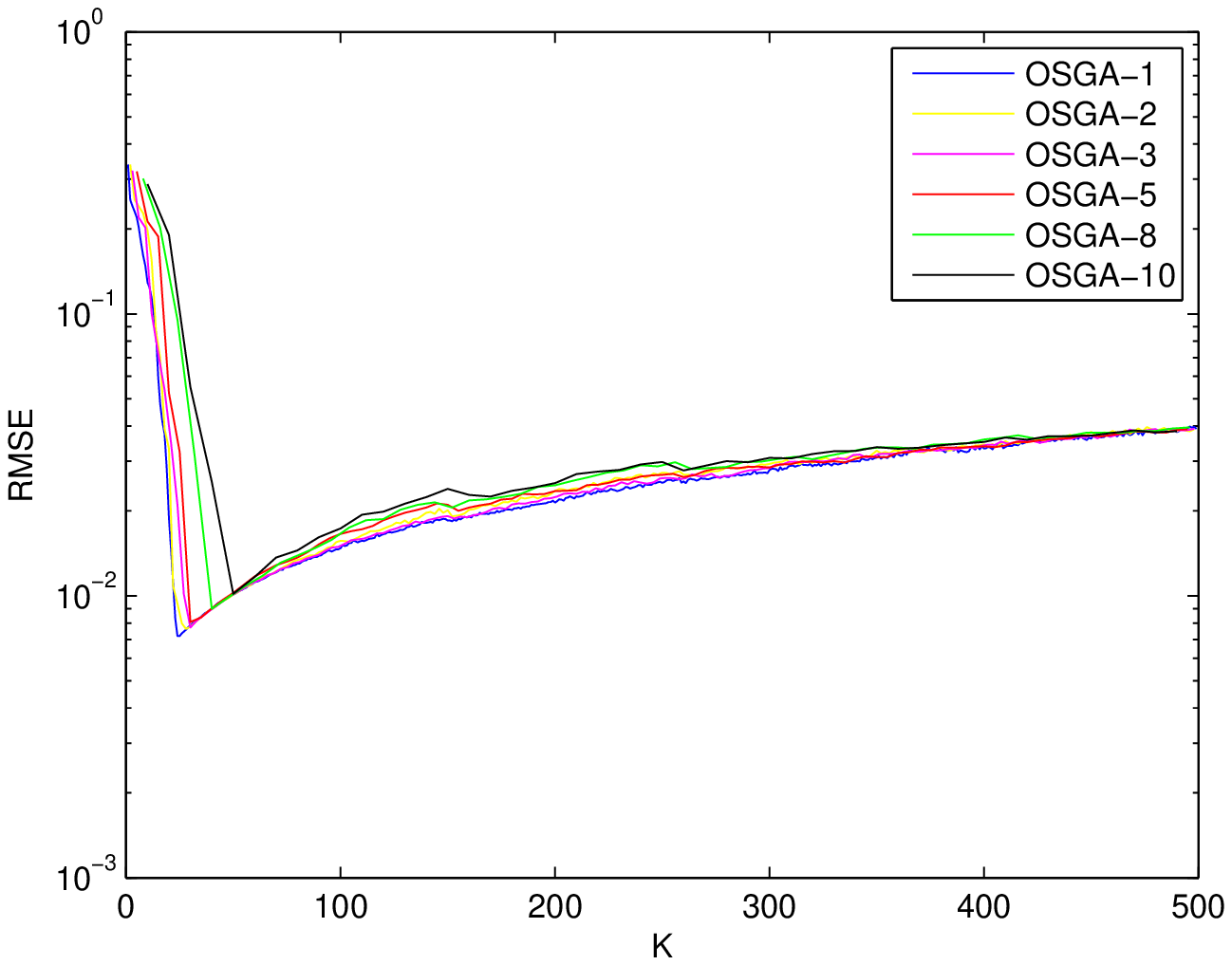}
\centerline{{\small (b)    GRD for $f_1$  }}
\end{minipage}
\hfill
\begin{minipage}[b]{.45\linewidth}
\includegraphics*[scale=0.45]{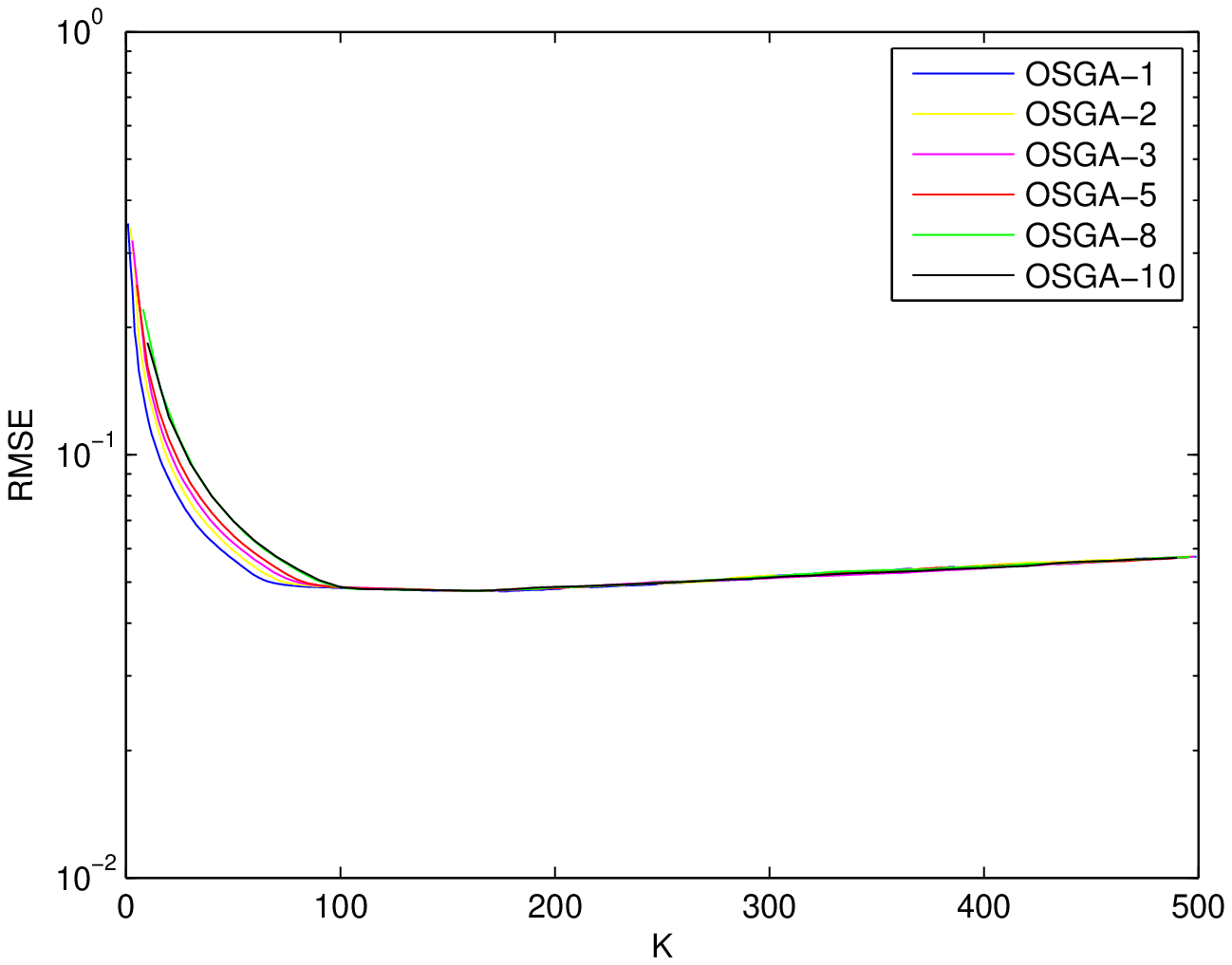}
\centerline{{\small (c)   TPD for $f_2$ }}
\end{minipage}
\hfill
\begin{minipage}[b]{.45\linewidth}
\includegraphics*[scale=0.45]{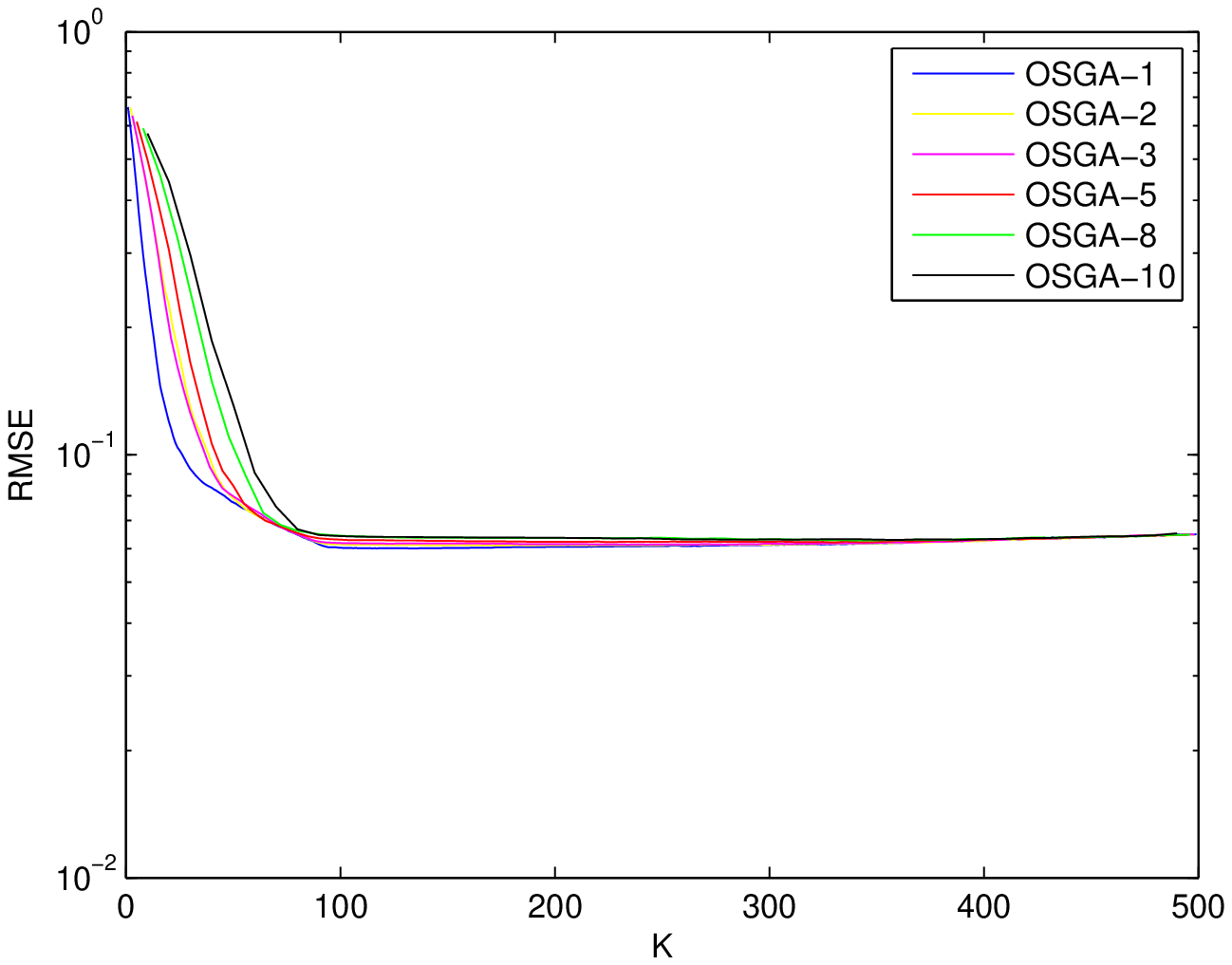}
\centerline{{\small (d) GRD for $f_2$ }}
\end{minipage}
\hfill \caption{  RMSE as a function of the sparsity}
\label{fig_detail}
\end{figure}

\subsection{The generalization ability of OSGA Learning}

Finally,  we assess the generalization capacity of OSGA learning as
compared with some typical dictionary-based learning methods. As the
purpose of this paper is not to pursue the best dictionary, we just
employ two fixed dictionaries as GRD and TPD. Specifically, we run
the OSGA-1 (or OGA), OSGA-2, OSGA-5, OSGA-10,  Lasso, ridge
regression, half regression, and greedy boosting on the same data
and dictionaries. Here, OSGA-s denotes that there are $s$ atoms
selected in the ``greedy-definition'' stage of OSGA. The results are
summarized in Tables 1 and 2. It can be found in Tables 1 and 2 that
for GRD, the test error of OSGA-s increases as $s$ increases, while
for TPD, the test error of OSGA-s monotonously decreases with
respect to $s$, which verifies our assertion proposed in Section 4.2
further. Moreover, it is shown in Tables 1 and 2 that the
performance of OSGA  is similar to  other competitive methods. This
shows that OSGA can reduce the computational burden without
sacrificing the generalization capability very much.

It should be noted that the results of OSGA is searched in the whole
finite discrete parameter space, and therefore it is easy for OSGA
to select the best parameter. However,  for both greedy boosting and
$L_q$ coefficient regularization, their main parameters are
distributed in   continuous (or infinite) spaces, which makes the
best parameters be difficult to achieve. This phenomenon can be
regarded as another advantage of OSGA, and also give a  reason why
$L_q$ coefficient regularization and greedy boosting's
generalization capability seems worse than OSGA in the second column
of Table 2. We believe that if a more elaborate parametric selection
method is given, then the RMSE of these methods can reduce.

\begin{table}[h]
\renewcommand{\arraystretch}{1}
\caption{Simulation results on the regression function $f_1$}
\label{table:result1}
\centering
\begin{tabular}{|c||c||c||c||c|}
\hline
Methods  & RMSE (GRD) & Sparsity (GRD) & RMSE (TPD) & Sparsity (TPD)\\
\hline
OSGA-1 &0.0073     &24.83   &0.0101  &20.09 \\
\hline
OSGA-2 &0.0075     &28.18   &0.0103  &25.24 \\
\hline
OSGA-5 &0.0080  &31.40  &   0.0095  &26.30  \\
\hline
OSGA-10 &0.0097     &43.70  &   0.0092  &30.10  \\
\hline
$L_2$ regularization &0.0069     &500.00    &0.0181  &500.00 \\
\hline
$L_1$ regularization &0.0075& 500.00& 0.0125& 69.03\\
\hline
$L_{1/2}$ regularization &0.0075     &475.16   &0.0118  &25.68\\
\hline
Greedy boosting &0.0080  &442.63   &0.0124  &78.16\\
\hline
\end{tabular}
\end{table}

\begin{table}[h]
\renewcommand{\arraystretch}{1}
\caption{Simulation results on the regression function $f_2$}
\label{table:result2}
\centering
\begin{tabular}{|c||c||c||c||c||c||c||c||c|}
\hline
Methods & RMSE (GRD) & Sparsity (GRD) & RMSE (TPD) & Sparsity (TPD)\\
\hline
OSGA-1  &0.0597   &146.16   &0.0483     &169.11\\
\hline
OSGA-2  &0.0603   &163.10   &0.0484     &165.16\\
\hline
OSGA-5   &0.0615  &  182.05 &0.0484 & 165.25\\
\hline
OSGA-10  &0.0624  &  172.30 &0.0485 & 155.70\\
\hline
$L_2$ regularization  &0.0763   &500.00   &0.0485     &500.00\\
\hline
$L_1$ regularization & 0.0824& 500.00& 0.0484& 268.20\\
\hline
$L_{1/2}$ regularization  &0.0823   &482.97   &0.0485     &170.71\\
\hline
Greedy boosting &0.0848   &450.78   &0.0485     &320.12\\
\hline
\end{tabular}
\end{table}


\section{Proofs}

To prove Theorem \ref{theorem1}, we need the following three lemmas.
The first and third lemmas can be found in \cite{Donoho2007} and
\cite{Devore1996}, respectively.

\begin{lemma}\label{coherence and RIP}
Assume a dictionary $\mathcal D$ has coherence $M$. Then we have for
any distinct $g_j\in\mathcal D, j=1,2,\dots, s$ and for any $a_j$,
$j=1,2\dots,s$, the inequalities
$$
      (1-M(s-1))\sum_{i=1}^sa_i^2\leq\left\|\sum_{i=1}^sa_ig_i\right\|^2\leq
      (1+M(s-1))\sum_{i=1}^sa_i^2.
$$
\end{lemma}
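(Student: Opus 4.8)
The plan is to expand the squared norm through the inner product and to separate the diagonal contribution, which is pinned down exactly by the normalization $\|g_i\|=1$, from the off-diagonal contribution, which is controlled by the coherence $M$. First I would write
$$
   \left\|\sum_{i=1}^s a_ig_i\right\|^2=\sum_{i=1}^s\sum_{j=1}^s a_ia_j\langle g_i,g_j\rangle=\sum_{i=1}^s a_i^2+\sum_{i\neq j}a_ia_j\langle g_i,g_j\rangle,
$$
using $\langle g_i,g_i\rangle=\|g_i\|^2=1$. The whole problem then reduces to showing that the cross term is bounded in absolute value by $M(s-1)\sum_{i=1}^s a_i^2$, after which inserting a two-sided bound into this identity delivers both inequalities at once.

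Next, since the $g_j$ are distinct, the definition of coherence gives $|\langle g_i,g_j\rangle|\leq M$ for every $i\neq j$, so
$$
   \left|\sum_{i\neq j}a_ia_j\langle g_i,g_j\rangle\right|\leq M\sum_{i\neq j}|a_i||a_j|=M\left(\left(\sum_{i=1}^s|a_i|\right)^2-\sum_{i=1}^s a_i^2\right).
$$
Applying the Cauchy--Schwarz inequality in the form $\left(\sum_{i=1}^s|a_i|\right)^2\leq s\sum_{i=1}^s a_i^2$ yields the clean estimate $\sum_{i\neq j}|a_i||a_j|\leq (s-1)\sum_{i=1}^s a_i^2$, and hence the cross term is at most $M(s-1)\sum_{i=1}^s a_i^2$ in magnitude. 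Substituting this into the expansion gives
$$
   (1-M(s-1))\sum_{i=1}^s a_i^2\leq\left\|\sum_{i=1}^s a_ig_i\right\|^2\leq(1+M(s-1))\sum_{i=1}^s a_i^2,
$$
as claimed.

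I expect no genuine obstacle: this is the standard passage from coherence to a restricted-isometry estimate. The one point requiring care is the combinatorial bookkeeping, namely that the interplay between the number of off-diagonal terms and Cauchy--Schwarz produces precisely the factor $(s-1)$ rather than $s$. An equivalent and perhaps more transparent route is to write $\|\sum_i a_ig_i\|^2=a^{\top}Ga$, where $G$ is the Gram matrix with unit diagonal and off-diagonal entries bounded by $M$, and then invoke Gershgorin's circle theorem: each eigenvalue of $G$ lies within distance $M(s-1)$ of $1$, so the Rayleigh quotient of $G$ is confined to $[1-M(s-1),\,1+M(s-1)]$. I would present the direct expansion, since it is self-contained and avoids citing the eigenvalue machinery.
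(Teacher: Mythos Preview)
Your proof is correct and is the standard argument. Note, however, that the paper does not actually prove this lemma: it simply cites it from Donoho, Elad and Temlyakov (\emph{On Lebesgue-type inequalities for greedy approximation}, J.\ Approx.\ Theory 147 (2007), 185--195), so there is no in-paper proof to compare against. Your direct expansion with the Cauchy--Schwarz step $\bigl(\sum_i|a_i|\bigr)^2\leq s\sum_i a_i^2$ is exactly the argument one finds in that reference and in the surrounding literature, and the Gershgorin alternative you mention is an equally standard variant.
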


\begin{lemma}\label{important estimation}
Assume a dictionary $\mathcal D$ has coherence $M$. Let
$\{g_i\}_{i=1}^s\subset\mathcal D$ and
$G(s):=\mbox{span}\{g_1,\dots,g_s\}$. Then we have
$$
       \frac1{1+M(s-1)}\sum_{i=1}^s\langle
       f,g_i\rangle^2\leq\|P_{G(s)}(f)\|^2\leq\frac1{1-M(s-1)}\sum_{i=1}^s\langle
       f, g_i\rangle^2.
$$
\end{lemma}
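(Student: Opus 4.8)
The plan is to prove the two inequalities for $\|P_{G(s)}(f)\|^2$ by working in the $s$-dimensional subspace $G(s)$ and relating the Gram matrix of $\{g_1,\dots,g_s\}$ to the identity via the coherence bound. First I would write $P_{G(s)}(f)=\sum_{i=1}^s c_i g_i$ for the (unique, since $M(s-1)<1$ guarantees linear independence by Lemma \ref{coherence and RIP}) coefficient vector $c=(c_1,\dots,c_s)$. The defining property of the orthogonal projection is that $\langle f-P_{G(s)}(f),g_j\rangle=0$ for every $j$, i.e. $\langle P_{G(s)}(f),g_j\rangle=\langle f,g_j\rangle$ for all $j$. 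Writing $b_j:=\langle f,g_j\rangle$ and $A:=(\langle g_i,g_j\rangle)_{i,j}$ the Gram matrix, this says $Ac=b$. Then, using that $P_{G(s)}(f)\in G(s)$ and the projection property again,
\[
   \|P_{G(s)}(f)\|^2=\langle P_{G(s)}(f),P_{G(s)}(f)\rangle=\langle f,P_{G(s)}(f)\rangle=\sum_{i=1}^s c_i b_i = c^\top b = b^\top A^{-1} b .
\]

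The remaining task is to bound $b^\top A^{-1} b$ above and below by $\frac{1}{1-M(s-1)}\sum b_i^2$ and $\frac{1}{1+M(s-1)}\sum b_i^2$. This is exactly a statement about the spectrum of $A$: Lemma \ref{coherence and RIP} says precisely that for every vector $a$,
\[
   (1-M(s-1))\,\|a\|_2^2 \le a^\top A\, a \le (1+M(s-1))\,\|a\|_2^2,
\]
so all eigenvalues of the symmetric positive definite matrix $A$ lie in $[\,1-M(s-1),\,1+M(s-1)\,]$, and hence all eigenvalues of $A^{-1}$ lie in $[\,(1+M(s-1))^{-1},\,(1-M(s-1))^{-1}\,]$. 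Applying the Rayleigh quotient bounds for $A^{-1}$ to the vector $b$ gives
\[
   \frac{1}{1+M(s-1)}\sum_{i=1}^s b_i^2 \;\le\; b^\top A^{-1} b \;\le\; \frac{1}{1-M(s-1)}\sum_{i=1}^s b_i^2,
\]
which together with the identity $\|P_{G(s)}(f)\|^2=b^\top A^{-1}b$ is exactly the claim.

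I do not expect a serious obstacle here; the one point requiring a little care is the translation from the quadratic-form bound on $A$ in Lemma \ref{coherence and RIP} to the corresponding bound on $A^{-1}$. The cleanest way is to diagonalize $A=Q\Lambda Q^\top$ with $Q$ orthogonal, observe $\lambda_i\in[1-M(s-1),1+M(s-1)]$ from Lemma \ref{coherence and RIP}, and then note $b^\top A^{-1}b=\sum_i \lambda_i^{-1}(Q^\top b)_i^2$ with $\sum_i(Q^\top b)_i^2=\|b\|_2^2$; the monotonicity $x\mapsto x^{-1}$ on the positive reals then yields both bounds at once. (If one prefers to avoid eigen-decomposition, the substitution $a=A^{-1}b$ in the bound $a^\top A a \ge (1-M(s-1))\|a\|_2^2$ gives $b^\top A^{-1} b \ge (1-M(s-1))\|A^{-1}b\|_2^2$, and then Cauchy--Schwarz $\|b\|_2^2=(b^\top A^{-1/2}\cdot A^{1/2} A^{-1}b)^2$-type manipulations recover the inequalities; the diagonalization route is shorter, so I would use that.)
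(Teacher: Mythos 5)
Your proof is correct, but it takes a different route from the one in the paper. You represent the projection through the normal equations: writing $b_j=\langle f,g_j\rangle$ and letting $A$ be the Gram matrix, you get the identity $\|P_{G(s)}(f)\|^2=b^\top A^{-1}b$ and then read off both inequalities at once from the spectral bounds $\lambda(A)\subset[1-M(s-1),\,1+M(s-1)]$, which is exactly the content of Lemma \ref{coherence and RIP} via the Rayleigh quotient. The paper instead works with the dual (variational) characterization $\|P_{G(s)}(f)\|=\max_{\psi\in G(s),\ \|\psi\|\le 1}|\langle f,\psi\rangle|$: the upper bound comes from Cauchy--Schwarz in coefficient space combined with the lower RIP estimate on $\|\psi\|$, and the lower bound from exhibiting the explicit near-optimal test element $\psi=\sum_i \langle f,g_i\rangle g_i$ normalized using the upper RIP estimate. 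The two arguments are dual to each other and rest on the same ingredient (Lemma \ref{coherence and RIP}); yours is more compact and handles both directions symmetrically through the monotonicity of $x\mapsto x^{-1}$ on the spectrum, at the cost of invoking invertibility of $A$ and a diagonalization, while the paper's avoids any matrix inversion and stays entirely at the level of quadratic-form estimates. Your parenthetical remark that $M(s-1)<1$ forces linear independence (so $A$ is invertible and $c=A^{-1}b$ is well defined) is the one hypothesis your route genuinely needs beyond what the paper's uses, and you correctly flag it; when $M(s-1)\ge 1$ the upper bound is vacuous anyway, so nothing is lost.
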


\begin{proof} It follows from the definition of $P_{G(s)}(f)$
that
$$
          \|P_{G(s)}(f)\|=\max_{\psi\in G(s),\|\psi\|\leq 1}|\langle
          f,\psi\rangle|.
$$
Let $\psi=\sum_{i=1}^sa_ig_i$. Then it follows form Lemma
\ref{coherence and RIP} that for arbitrary $\psi\in G(s)$ and
$\|\psi\|\leq 1$, there holds
$$
       |\langle f,\psi\rangle|^2=\left|\left\langle
       f,\sum_{i=1}^sa_ig_i\right\rangle\right|^2
       =\left|\sum_{i=1}^sa_i\left\langle
       f,g_i\right\rangle\right|^2
       \leq\sum_{i=1}^sa_i^2\sum_{i=1}^s\langle f,g_i\rangle^2
       \leq
        \frac1{1-M(s-1)}\sum_{i=1}^s\langle f,g_i\rangle^2.
$$
Therefore, we get
$$
          \|P_{G(s)}(f)\|^2\leq\frac1{1-M(s-1)}\sum_{i=1}^s\langle
          f,g_i\rangle^2.
$$
To bound $\|P_{G(s)}(f)\|^2$ from below, noting
$$
           \left\|\sum_{i=1}^s\langle f, g_i\rangle
           g_i\right\|^2\leq (1+M(s-1))\sum_{i=1}^s\langle
           f,g_i\rangle^2,
$$
we have
$$
          \left\|\sum_{i=1}^s\frac{\langle f,g_i\rangle}{(1+M(s-1))^{1/2}\left|\sum_{i=1}^s\langle
          f,g_i\rangle^2\right|^{1/2}}g_i\right\|\leq 1.
$$
Thus, there holds
\begin{eqnarray*}
         \|P_{G(s)}(f)\|^2
         &=&
         \max_{\psi\in G(s),\|\psi\|\leq 1}|\langle
          f,\psi\rangle|
          \geq
          \left|\left\langle f,\sum_{i=1}^s\frac{\langle f,g_i\rangle}{(1+M(s-1))^{1/2}\left|\sum_{i=1}^s\langle
          f,g_i\rangle^2\right|^{1/2}}g_i\right\rangle\right|^2\\
          &=&
          \left|\sum_{i=1}^s\frac{\langle
          f,g_i\rangle^2}{(1+M(s-1))^{1/2}\left|\sum_{i=1}^s\langle
          f,g_i\rangle^2\right|^{1/2}}\right|^2\\
          &=&
          \left|(1+M(s-1))^{-1/2}\left(\sum_{i=1}^s\langle
          f,g_i\rangle^2\right)^{1/2}\right|^2\\
          &=&
          (1+M(s-1))^{-1}\sum_{i=1}^s\langle f,g_i\rangle^2.
\end{eqnarray*}
This finishes the proof of Lemma \ref{important estimation}
\end{proof}

\begin{lemma}\label{induction}
Let $(a_n)_{n\neq0}$ be a set of decreasing nonnegative numbers that
satisfy $a_0\leq R$ and $a_k\leq
a_{k-1}\left(1-\frac{a_{k-1}}{R}\right)$ for all $k>0$. Then, for
all $n>0$, there holds $a_n\leq \frac{R}{n+1}$.
\end{lemma}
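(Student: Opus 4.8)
The plan is to prove the slightly more general statement that $a_n \le R/(n+1)$ holds for \emph{every} $n \ge 0$, by induction on $n$; the base case $n = 0$ is nothing but the hypothesis $a_0 \le R$. For the inductive step I would introduce the auxiliary function $g(x) = x\left(1 - \frac{x}{R}\right) = x - \frac{x^2}{R}$ on $[0,R]$, so that the recursion hypothesis on the sequence reads simply $a_k \le g(a_{k-1})$. The features of $g$ that matter are that it is concave and nonnegative on $[0,R]$, vanishes at both endpoints, is nondecreasing on $[0,R/2]$, and attains its global maximum $g(R/2) = R/4$ at the vertex $x = R/2$.

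Granting $a_{k-1} \le R/k$ for some $k \ge 1$, I would split into two regimes. If already $a_{k-1} \le R/(k+1)$, then since $0 \le 1 - a_{k-1}/R \le 1$ the recursion gives $a_k \le a_{k-1} \le R/(k+1)$ and we are done; this is the one place where the monotonicity of the sequence is invoked, and it is in fact automatic from the hypothesis. Otherwise $R/(k+1) < a_{k-1} \le R/k$, and for $k \ge 2$ the whole interval $[R/(k+1), R/k]$ lies in $[0, R/2]$ where $g$ is nondecreasing, so $a_k \le g(a_{k-1}) \le g(R/k) = \frac{R}{k}\left(1 - \frac1k\right) = \frac{R(k-1)}{k^2} \le \frac{R}{k+1}$, the last inequality being the identity $(k-1)(k+1) = k^2 - 1 \le k^2$. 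The leftover value $k = 1$ is disposed of by the crude bound $a_1 \le g(a_0) \le \max_{x \in [0,R]} g(x) = R/4 \le R/2 = R/(1+1)$. This closes the induction.

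The only point needing any attention — the ``obstacle'', such as it is — is that the one-line estimate $a_k \le g(a_{k-1}) \le g(R/k)$ is legitimate only when $R/k$ has not overshot the vertex $R/2$ of the parabola, i.e. only for $k \ge 2$, which is why the first step must be handled separately via the global bound $g \le R/4$. As an alternative that sidesteps the case split: if $a_K = 0$ for some $K$ then $a_n = 0$ for all $n \ge K$ by monotonicity and the claim is trivial there, whereas for indices with $a_k > 0$ (which forces $a_{k-1} < R$) one sets $b_k := 1/a_k$ and uses $\frac{1}{1-t} \ge 1 + t$ for $t \in [0,1)$ to get $b_k \ge \frac{1}{a_{k-1}(1 - a_{k-1}/R)} \ge \frac{1}{a_{k-1}} + \frac1R = b_{k-1} + \frac1R$, whence $b_n \ge b_0 + \frac nR \ge \frac{n+1}{R}$ since $b_0 = 1/a_0 \ge 1/R$, and therefore $a_n = 1/b_n \le R/(n+1)$. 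Either way the argument stays completely elementary.
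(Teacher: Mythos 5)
Your proof is correct. Note that the paper itself gives no proof of this lemma: it simply cites DeVore and Temlyakov \cite{Devore1996}, so there is no in-text argument to compare against. Your second, ``reciprocal'' argument is essentially the classical proof from that reference (and the one used in Barron et al.\ \cite{Barron2008}): passing to $b_k=1/a_k$, using $\frac{1}{1-t}\ge 1+t$ to get $b_k\ge b_{k-1}+\frac1R$, and telescoping, with the degenerate case $a_K=0$ handled by monotonicity. Your first argument, the direct induction on $a_n\le R/(n+1)$ via the parabola $g(x)=x(1-x/R)$, is a sound alternative; you correctly identify the one delicate point, namely that the monotone bound $g(a_{k-1})\le g(R/k)$ is only valid once $R/k$ lies left of the vertex $R/2$, i.e.\ for $k\ge 2$, and you patch $k=1$ with the global bound $g\le R/4\le R/2$. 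The telescoping route is shorter and needs no case split, while the induction route makes the role of each hypothesis (nonnegativity, $a_0\le R$, the quadratic recursion) more visible; both are complete.
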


By the help of the above lemmas, we are in a position to give the
proof of Theorem \ref{theorem1}.

\begin{proof}[Proof of Theorem \ref{theorem1}]
Let $P_f$ be the projection of $f$ onto $\mathcal L_1$. Noting that
every element of $\mathcal L_1$ can be approximated arbitrarily well
by elements of the form
$$
         \phi=\sum_{j=1}^\infty c_jg_j, g_j\in\mathcal
         D,\sum_{j=1}^\infty|c_j|\leq\|P_f\|_{\mathcal L_1}+\delta,
         |c_1|\geq|c_2|\geq\dots,
$$
where $\delta>0$ is arbitrary positive number. It will be clear from
the following argument that it is sufficient to consider elements
$P_f$ of the above form. Suppose $v$ is such that
$$
            |c_v|\geq\frac{2(\|P_f\|_{\mathcal
            L_1}+\delta)}{s}\geq|c_{v+1}|.
$$
Then, the aforementioned assumption on the sequence $\{c_j\}$ yields
that $v\leq\frac{s}2$ and $|c_{s+1}|<\frac{\|P_f\|_{\mathcal
L_1}+\delta}{s}$. We claim that elements $g_1,\dots,g_v$ will be
chosen among $\varphi_1,\dots,\varphi_s$ at the first iteration.
Indeed, for $j\in[1,v]$, we have
\begin{eqnarray*}
      |\langle P_f,g_j\rangle|
      &=&
      \left|\left\langle\sum_{k=1}^\infty
      c_kg_k,g_j\right\rangle\right|=\left|c_j+\sum_{k\neq j}
      c_k\langle g_k,g_j\rangle\right|\\
      &\geq&
      |c_j|-\left|\sum_{k\neq j}c_k\langle g_k,g_j\rangle\right|
      \geq
      |c_j|-M(\|P_f\|_{\mathcal L_1}+\delta-|c_j|)\\
      &=&
      |c_j|(1+M)-M((\|P_f\|_{\mathcal L_1}+\delta)
      \geq
      \frac{2(\|P_f\|_{\mathcal L_1}+\delta)}{s}(1+M)-M(\|P_f\|_{\mathcal
      L_1}+\delta).
\end{eqnarray*}
For all $g$ distinct from $g_1,\dots,g_s$, we have
$$
         |\langle P_f,g\rangle|=\left|\sum_{k=1}^\infty c_k\langle
         g_k,g\rangle\right|<\frac{(\|P_f\|_{\mathcal
         L_1}+\delta)}{s}+M(\|P_f\|_{\mathcal L_1}+\delta)=
         (\|P_f\|_{\mathcal L_1}+\delta)(M+\frac1s).
$$
Since $s\leq\frac1{2M}+1$, we obtain
$$
        |\langle P_f,g_i\rangle|\geq|\langle P_f,g\rangle|
$$
for all $g$ distinct from $g_1,\dots,g_s$. This implies that
$$
        |\langle f,g_i\rangle|\geq|\langle f,g\rangle|
$$
for all $g$ distinct from $g_1,\dots,g_s$.
 Thus, we do not pick any
$g\in\mathcal D$ distinct from $g_1,\dots,g_s$ until we have chosen
all $g_1,\dots,g_v$.

Now we proceed the proof of Theorem \ref{theorem1}. Denote
$F_m=\mbox{span}(\varphi_i,i\in I_m)$. Then, $H_{m-1}, F_m\in H_m$.
Therefore,
$$
         r_m^s=f-P_{H_{m}}(f)=r_{m-1}^s+f_{m-1}^s-P_{H_m}(r_{m-1}^s+f_{m-1}^s)
         =r_{m-1}^s-P_{H_m}(r_{m-1}^s).
$$
It is clear that the inclusion $F_m\subset H_m$ implies
$$
             \|r_m^s\|\leq\|r_{m-1}^s-P_{F_m}(r_{m-1}^s)\|.
$$
Using the notation $p_m=P_{F_m}(r^s_{m-1})$. We continue
$$
       \|r_{m-1}^s\|^2=\|r_{m-1}^s-p_m\|^2+\|p_m\|^2
$$
and
$$
         \|r_m^s\|^2\leq\|r_{m-1}^s\|^2-\|p_m\|^2.
$$

It is obvious that for arbitrary $h\in\mathcal L_1$,
$$
        \|r_{m-1}^s\|^2=\langle r_{m-1}^s,f\rangle=\langle
        r_{m-1}^s,h+f-h\rangle =\langle r_{m-1}^s,h\rangle+\langle
        r_{m-1}^s, f-h\rangle.
$$
The known Cauchy-Schwarz inequality implies that
\begin{equation}\label{first term}
               \langle
               r_{m-1}^s,f-h\rangle\leq\|r_{m-1}^s\|\cdot\|f-h\|.
\end{equation}

Now we turn to bound $\langle r_{m-1}^s,h\rangle$. Denote
$J_l=[(l-1)s+v+1,ls+v]$, $G(J_l)=\mbox{span}\{g_i\}_{i\in J_l}$, and
$$
         q_s=q_s(r_{m-1}^s)=\sup_{g_i\in\mathcal D,
         i\in[1,s]}\|P_{G(s)}(r_{m-1}^s)\|,
$$
 we then write for $m\geq 2$,
\begin{eqnarray*}
           \langle r_{m-1}^s,h\rangle
           &=&
           \left\langle r_{m-1}^s,h-\sum_{j=1}^vc_jg_j\right\rangle
           =
           \left\langle r_{m-1}^s,\sum_{j=v+1}^\infty
           c_jg_j\right\rangle\\
           &=&
           \sum_{l=1}^\infty\langle r_{m-1}^s,\sum_{j\in
           J_l}c_jg_j\rangle
           =\sum_{l=1}^\infty\sum_{j\in J_l}c_j\langle
           r_{m-1}^s,g_j\rangle\\
           &\leq&
           \sum_{l=1}^\infty\left(\sum_{j\in
           J_l}c_j^2\right)^{1/2}\left(\sum_{j\in J_l}\langle
           r_{m-1}^s,g_j\rangle^2\right)^{1/2}.
\end{eqnarray*}
Hence, Lemma \ref{important estimation} implies that
\begin{eqnarray*}
       \langle r_{m-1}^s,h\rangle
       &\leq&
       \sum_{l=1}^\infty\left(\sum_{j\in
       J_l}c_j^2\right)^{1/2}(1+M(s-1))^{1/2}P_{G(J_l)}(r_{m-1})\\
       &\leq&
       \sum_{l=1}^\infty\left(\sum_{j\in
       J_l}c_j^2\right)^{1/2}(1+M(s-1))^{1/2}q_s.
\end{eqnarray*}
As the sequence $\{c_j\}$ has the property
$$
           |c_{v+1}|\geq|c_{v+2}|\geq\cdots,
           \sum_{j=v+1}^\infty|c_j|\leq\|h\|_{\mathcal L_1}+\delta,
           |c_{v+1}|\leq\frac{2(\|h\|_{\mathcal L_1}+\delta)}{s},
$$
we may apply the simple inequality
$$
           \left(\sum_{j\in J_l} c_j^2\right)^{1/2}\leq
           |c_{(l-1)s+v+1|} s^{1/2}
$$
so that we have
\begin{eqnarray*}
         \sum_{l=1}^\infty\left(\sum_{j\in J_l }c_j^2\right)^{1/2}
         &\leq&
         s^{1/2}\sum_{l=1}^\infty|c_{(l-1)s+v+1}|
         \leq
         s^{1/2}\left(\frac{2(\|h\|_{\mathcal
         L_1}+\delta)}{s}+\sum_{l=2}^\infty s^{-1}\sum_{j\in
         J_{l-1}}|c_j|\right)\\
         &\leq&
         3(\|h\|_{\mathcal
         L_1}+\delta) s^{-1/2}.
\end{eqnarray*}
Therefore, we obtain
\begin{equation}\label{second term}
         \langle r_{m-1}^s,h\rangle\leq q_s(1+M(s-1))^{1/2}3(\|h\|_{\mathcal
         L_1}+\delta)s^{-1/2}.
\end{equation}
It follows from (\ref{first term}) and (\ref{second term}) that
\begin{eqnarray*}
       \|r_{m-1}^s\|^2
       &\leq&
       \|r_{m-1}^s\|\|f-h\|+q_s(1+M(s-1))^{1/2}3(\|h\|_{\mathcal
         L_1}+\delta)s^{-1/2}\\
         &\leq&
         \frac12(\|r_{m-1}^s\|^2+\|f-h\|^2)+q_s(1+M(s-1))^{1/2}3(\|h\|_{\mathcal
         L_1}+\delta)s^{-1/2}.
\end{eqnarray*}
Denote $a_m=\|r_m^s\|^2-\|f-h\|^2$, then (\ref{second term}) implies
that
$$
          q_s\geq\frac{a_{m-1}s^{1/2}}{3(\|h\|_{\mathcal
          L_1}+\delta)(1+M(s-1))^{1/2}}.
$$

Note that if for some $k_0$, we have $\|r_{k_0-1}^s\|\leq\|f-h\|,$
then the theorem holds trivially for all $N\geq k_0-1$. We therefore
assume that $a_{k-1}$ is positive, so that we can write
$$
         q_s^2\geq\frac{a_{m-1}^2s}{9(\|h\|_{\mathcal
         L_1}+\delta)^2(1+M(s-1))}.
$$
Thus, we obtain
$$
         \|r_m^s\|^2\leq\|r_{m-1}^s\|^2-\|p_m\|^2\leq\|r_{m-1}^s\|^2-q_s^2
         \leq
         \|r_{m-1}^s\|^2-\frac{a_{m-1}^2s}{9(\|h\|_{\mathcal
         L_1}+\delta)^2(1+M(s-1))},
$$
which, by subtracting $\|f-h\|^2$ in the both sides, gives
$$
         a_m\leq a_{m-1}\left(1-\frac{a_{m-1}s}{9(\|h\|_{\mathcal
         L_1}+\delta)^2(1+M(s-1))}\right).
$$
The above inequality together with Lemma \ref{induction} yields that
\begin{equation}\label{last}
          a_m\leq\frac{9(\|h\|_{\mathcal
         L_1}+\delta)^2(1+M(s-1))}{s}m^{-1},
\end{equation}
provided that
\begin{equation}\label{intinal condition}
       a_1\leq\frac{9(\|h\|_{\mathcal
         L_1}+\delta)^2(1+M(s-1))}{s}.
\end{equation}
To prove (\ref{intinal condition}), we remark that either
$a_0\leq\frac{9(\|h\|_{\mathcal
         L_1}+\delta)^2(1+M(s-1))}{s}$, so that the same holds for
         $a_1$, or $a_0\geq \frac{9(\|h\|_{\mathcal
         L_1}+\delta)^2(1+M(s-1))}{s}$, in which case $a_1<0$ by
         (\ref{last}),
which means that we are already in the trivial case
$\|r_1^s\|\leq\|f-h\|$ for which there is noting to prove. As
$\delta$ is arbitrary positive number and $s\leq \frac1{2M}+1$, we
have
$$
        \|r_m^s\|^2\leq\|f-h\|^2+9(\|h\|_{\mathcal
         L_1}+\delta)^2(1+M(s-1))(sm)^{-1}
         \leq
         \|f-h\|^2+\frac{27}2\|h\|_{\mathcal L_1}^2(sm)^{-1}.
$$
This finishes the proof of Theorem \ref{theorem1}.
\end{proof}

To prove Theorem \ref{learning1}, we need the following lemma, which
can be found in \cite[Theorem 11.3]{Gyorfi2002}.

\begin{lemma}\label{oracle}
 Suppose that $\mathcal F_n$ is a linear vector space of
functions $f:X\rightarrow Y$ which may depend on $x_1,\dots,x_n$.
Let $\mu=\mu(x_1,\dots,x_n)$ be the vector space dimension of
$\mathcal F_n$. If we define $f_{\bf z,\mathcal F_n}$ as
$$
              f_{\bf z,\mathcal F_n}:=\arg\min_{f\in\mathcal
              F_n}\frac1n\sum_{i=1}^n|f(x_i)-y_i|^2,
$$
then we have
$$
             E(\|\Pi f_{{\bf z},\mathcal F_n}-f_\rho\|^2_\rho)\leq
             CL^2\frac{\mu\log n}{n}+8\min_{f\in\mathcal
             F_n}\|f_\rho-f\|_\rho^2,
$$
where $C$ is a universal constant.
\end{lemma}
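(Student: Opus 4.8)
This is the classical oracle inequality for a truncated least squares estimate over a finite dimensional linear space \cite[Chapter 11]{Gyorfi2002}, so the plan is to reproduce that argument in three movements: an excess-risk decomposition, a uniform empirical-process bound for the stochastic part, and a covering-number estimate that converts the dimension $\mu$ into the factor $\mu\log n$. Throughout, $\Pi$ denotes truncation at level $L$ and I write $f^*:=\arg\min_{f\in\mathcal F_n}\|f-f_\rho\|_\rho$ for the population-optimal element of $\mathcal F_n$.

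First I would use the identity (\ref{equality}) to turn the left-hand side into an excess risk, $\|\Pi f_{{\bf z},\mathcal F_n}-f_\rho\|_\rho^2=\mathcal E(\Pi f_{{\bf z},\mathcal F_n})-\mathcal E(f_\rho)$, and split it as a deterministic approximation term $\|f^*-f_\rho\|_\rho^2=\min_{f\in\mathcal F_n}\|f-f_\rho\|_\rho^2$ plus a stochastic estimation term measuring how far the empirically optimal truncated estimator sits from $f^*$. The multiplicative constant $8$ in front of the approximation error is generated here: decoupling the two terms requires repeated use of $(a+b)^2\leq 2a^2+2b^2$ to absorb the cross term into a fixed multiple of $\|f^*-f_\rho\|_\rho^2$, and tracking those constants through the chain yields the constant $8$.

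The core is the estimation term. Since $f_{{\bf z},\mathcal F_n}$ minimizes the empirical risk $\frac1n\sum_i|f(x_i)-y_i|^2$ over $\mathcal F_n$, and since $|y_i|\leq L$ guarantees that truncating toward $[-L,L]$ never increases the distance to $f_\rho$, I would pass to the bounded class $\mathcal G_n:=\{\Pi f:f\in\mathcal F_n\}$ and control, uniformly over $\mathcal G_n$, the gap between the population excess risk $\|g-f_\rho\|_\rho^2$ and its empirical counterpart. The device that produces the sharp $1/n$ rate (rather than $1/\sqrt n$) is a variance-weighted, ratio-type concentration inequality: one bounds a quantity of the form $\sup_{g\in\mathcal G_n}\big(\|g-f_\rho\|_\rho^2-2\|g-f_\rho\|_n^2\big)\big/\big(\delta+\|g-f_\rho\|_\rho^2\big)$, which, combined with the empirical optimality of $f_{{\bf z},\mathcal F_n}$, gives $E(\text{estimation term})\leq c\,L^2\,(\log\mathcal N+1)/n$, where $\mathcal N$ is a covering number of $\mathcal G_n$ in the empirical $L^2$ metric at scale $\sim 1/n$.

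It then remains to compute the covering number. Because $\mathcal F_n$ is linear of dimension $\mu$ and every member of $\mathcal G_n$ is bounded by $L$, the truncated class has pseudo-dimension $O(\mu)$, so standard volumetric estimates give $\log\mathcal N\big(\tfrac1n,\mathcal G_n,\|\cdot\|_n\big)=O(\mu\log n)$; substituting into the previous display yields the announced $CL^2\mu\log n/n$ term. The hardest step is the third one: obtaining the correct rate forces the use of the ratio (variance-dependent) empirical-process inequality together with a squaring/peeling device that must remain compatible with the truncation while keeping all constants independent of $n$ and $\mu$. A secondary point is the permitted data-dependence of $\mathcal F_n$: when $\mathcal F_n$ is the span of atoms chosen from a fixed finite dictionary it is handled by a union bound over the finitely many candidate spaces, which inflates only the $\log$ factor already present and so does not affect the stated form of the bound.
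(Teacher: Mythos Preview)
The paper does not actually prove this lemma; it simply cites it as \cite[Theorem 11.3]{Gyorfi2002} and uses it as a black box. Your proposal, by contrast, sketches the classical Gy\"orfi--Kohler--Krzy\.zak--Walk argument (excess-risk decomposition, ratio-type uniform deviation inequality with peeling, and a covering/pseudo-dimension bound for the truncated linear class), which is precisely the route taken in that reference. So your plan is correct and in fact supplies what the paper omits.

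One small correction concerns your final paragraph. The lemma as stated allows $\mathcal F_n$ to depend on $x_1,\dots,x_n$ only, not on the responses; in Gy\"orfi et al.\ this is handled simply by conditioning on the design points, after which $\mathcal F_n$ is deterministic and the empirical-process machinery applies directly. No union bound over candidate subspaces is needed for the lemma itself. (A union bound over $\binom{N}{sm}$ subspaces would indeed be one way to cope with response-dependent selection, as occurs in the OSGA application, but that is an issue for how the paper \emph{applies} the lemma in the proof of Theorem~\ref{learning1}, not for the lemma's own proof.)
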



Based on Lemma \ref{oracle}, we give the proof of Theorem
\ref{learning1}.

\begin{proof}[Proof of Theorem \ref{learning1}]
 We divide the OSGA(s) into two parts. The one
is to choose $sm$ atoms from the dictionary using $m$ steps and the
other is to implement a least square algorithm on an
$sm$-dimensional linear space $V_{sm}$. Once the $sm$-dimensional
linear space is fixed, it follows from Lemma \ref{oracle} that
\begin{equation}\label{le.1}
             E(\|\Pi f_{{\bf z},m}^s-f_\rho\|_\rho^2)\leq
             L^2\frac{sm\log n}{n}+ \min_{f\in
             V_{sm}}\|f_\rho-f\|_\rho^2.
\end{equation}
Thus, we only need to give an upper bound of $ \min_{f\in\mathcal
             V_{sm}}\|f_\rho-f\|_\rho^2.$
By Theorem \ref{theorem1}, we obtain for arbitrary
$h\in\mbox{span}\mathcal D_N$,
\begin{eqnarray*}
       &&\frac1n\sum_{i=1}^n(y_i-f^s_{{\bf z},m}(x_i))^2-\frac1n\sum_{i=1}^n(y_i-f_\rho(x_i))^2\\
       &\leq&
         \frac1n\sum_{i=1}^n(y_i-h(x_i))^2-\frac1n\sum_{i=1}^n(y_i-f_\rho(x_i))^2+
         \frac{27}2\|h\|_{\mathcal L_{1,N}}^2(ms)^{-1}.
\end{eqnarray*}
It follows from (\ref{equality}) that
\begin{eqnarray*}
         \|f^s_{{\bf
         z},m}-f_\rho\|_\rho^2
         &=&E\left(\frac1n\sum_{i=1}^n(y_i-f^s_{{\bf
         z},m}(x_i))^2-\frac1n\sum_{i=1}^n(y_i-f_\rho(x_i))^2\right)\\
         &=&
         E(|y-f^s_{{\bf
         z},m}(x)|^2)-E(|y-f_\rho(x)|^2)
\end{eqnarray*}
and
\begin{eqnarray*}
         \|h-f_\rho\|^2_\rho
         &=&E\left(\frac1n\sum_{i=1}^n(y_i-h(x_i))^2-\frac1n\sum_{i=1}^n(y_i-f_\rho(x_i))^2\right)\\
         &=&
         E(|y-h(x)|^2)-E(|y-f_\rho(x)|^2).
\end{eqnarray*}
The above two equalities yield that
$$
             \min_{f\in V_{sm}}\|f_\rho-f\|_\rho^2
             \leq
             \|f_\rho-f^s_{{\bf z},m}\|^2_\rho\leq \|h-f_\rho\|^2_\rho+\frac{27}{2sm}\|h\|_{\mathcal
             L_{1,N}}^2,
$$
which together with (\ref{le.1}) completes the proof of Theorem
\ref{learning1}.
\end{proof}

\section{Concluding Remarks }

The main contributions of the present paper can be summarized as
follows. Firstly, we have proposed that studying the relationship
between the ``dictionary-selection'' and ``greedy-definition''
stages can improve the learning performance of greedy learning. In
fact, we borrowed the idea of orthogonal super greedy algorithm
(OSGA) for incoherent dictionaries from nonlinear approximation and
compressive sensing \cite{Liu2012} to the supervised learning
problem and analyze the pros and cons of OSGA learning. Secondly, we
have established an approximation theorem of OSGA approximation and
show that OSGA is also available to a variety of target function
classes which is not simply related to the convex hull of the
dictionary. Such an approximation theorem is the main tool to
generalize the application of OSGA from approximation to learning.
Thirdly, we have theoretically proved that, for incoherent
dictionaries, OSGA learning can reduce the computational burden of
OGA learning without sacrificing its generalization capability.
Precisely, our error estimate for OSGA learning yields a learning
rate  as $(n/\log n)^{-1/2}$, which is the same as that of OGA
\cite{Barron2008}. Finally, we have studied the numeral performance
of OSGA. Our results show that when applied in supervised learning
problem, OSGA yields a similar prediction accuracy as both OGA and
other dictionary-based learning schemes, but has the potentials to
reduce the price in both training and test time.

To make sense of the OSGA learning presented in this paper, we
conclude this paper by the following remarks concerning some crucial
issues of OSGA learning.

\begin{remark}\label{remark1}
In Theorem \ref{learning1}, we study the learning capability of OSGA
under the assumption that the dictionary is incoherent. However, in
Section 4, we employ the simulations by utilizing two fixed
dictionaries: TPD and GRD. It is easy to see that neither TPD nor
GRD satisfies the conditions of Theorem \ref{learning1}. However,
the numerical results show that implementing OSGA in TPD can improve
the learning performance compared with the classical OGA. This fact
shows that the theoretical results about OSGA are a bit pessimistic
and the incoherence  constraint to the dictionary can be relaxed
further. Thus, we are usually  asked for an essentially  constraint
to the dictionary instead of the incoherence. Under this constraint,
OSGA can essentially improve the learning performance in the sense
that OSGA reduces the computational burden of OGA without
sacrificing its generalization capability. Admittedly, this is a
very difficult but important issue about OSGA learning.
 We will keep working on this interesting project,
and report our progress in a future publication.
\end{remark}

\begin{remark}\label{remark2}
 Practitioners have asked us frequently the following question:
 How to choose the step-size parameter $s$ for OSGA learning? This is a very good question. Admittedly, it
is often unlikely to tackle only the  dictionary with very small
coherence. Thus, judiciously choosing a value for $s$ is crucial. If
$s$ is chosen to be too large, although the training time is
reduced, the generalization capability may  be weakened. If $s$ is
selected to bee too small, then OSGA cannot essentially outperform
the classical OGA. We  think the best choice of $s$ depends heavily
on the essential condition developed in Remark \ref{remark1}. Once
the essential condition is found, the best choice of $s$ can be
consequently determined. We will also keep working on this practical
issue and report our progress in a future publication.
\end{remark}

\begin{remark}\label{remark3}
In the simulations, we present an example that in the
one-dimensional case, the  TPD dictionary perfectly guarantees the
effectiveness of OSGA learning. However, in high-dimensionally
cases, there lack  of such ``good'' and easy-implemented
dictionaries, which more or less influences the application of OSGA
learning. Therefore, how to develop generally ``good'' and
easy-implemented dictionaries for OSGA learning deserves further
studies.
\end{remark}

\end{document}